\newtheorem{theorem}{Theorem}
\newtheorem{lemma}[theorem]{Lemma}
\newtheorem{proposition}{Proposition}
\newtheorem{definition}{Definition}
\newtheorem{remark}{Remark}
\newcommand{\fParantheses}[1]{\left ({#1} \right )}
\newcommand{\fBrackets}[1]{\left [{#1} \right ]}
\newcommand{\E}{\mathbb{E}}
\newcommand{\calD}{\mathcal{D}}
\newcommand{\Hcal}{{\mathcal H}}
\renewcommand{\hbar}{{\bar{h}}}
\newcommand{\EE}{\mathbb{E}}
\definecolor{mydarkblue}{rgb}{0,0.08,0.45}
\title{Delegating Data Collection in Decentralized Machine Learning}
\newcommand{\printfnsymbol}[1]{%
  \textsuperscript{\@fnsymbol{#1}}%
}
\author{Nivasini Ananthakrishnan}
\author{Stephen Bates}
\author{Michael I. Jordan}
\author{Nika Haghtalab}
\affil{University of California, Berkeley}
\date{}                     %% if you don't need date to appear
\begin{document}

\maketitle
\begin{abstract}
  Motivated by the emergence of decentralized machine learning (ML) ecosystems, we study the delegation of data collection.  Taking the field of contract theory as our starting point, we design optimal and near-optimal contracts that deal with two fundamental information asymmetries that arise in decentralized ML: uncertainty in the assessment of model quality and uncertainty regarding the optimal performance of any model. We show that a principal can cope with such asymmetry via simple linear contracts that achieve $1-1/e$ fraction of the optimal utility. 
To address the lack of a priori knowledge regarding the optimal performance, we give a convex program that can adaptively and efficiently compute the optimal contract. We also study linear contracts and derive the optimal utility in the more complex setting of multiple interactions.

\end{abstract}

\section{Introduction}
\label{sec:intro}
The design of machine learning pipelines is increasingly a cooperative, distributed endeavor, in which the expertise needed for the design of various components of an overall pipeline is spread across many stakeholders.  Such
expertise pertains in part to classical design choices such as how much and what kind of data to use for training, how much test data to use for verification,
how to train a model, and how to tune hyper-parameters, but, more broadly, expertise may reflect experience, access to certain resources, or knowledge of local conditions. To the extent that there is a central designer, their role may in large part be that of setting requirements, developing coordination mechanisms, and providing incentives.

Overall, we are seeing a flourishing new industry at the intersection of
ML and operations which makes use of specialization and decentralization to
achieve high performance and operational efficiency.  Such an ML ecosystem creates
a need for new design tools and insights that are not focused merely on how the
designer could perform a task in this pipeline, but rather how she should delegate
it to agents who are willing and capable of performing the task on her behalf.
\emph{How should the designer interact with this ecosystem? How should she evaluate and compensate other agents for their work? How does the outcome of the delegated pipeline compare with the outcome if the designer were to perform the task by herself?}
In this work, we initiate the study of delegating machine learning pipelines through the lens of \emph{contract theory} and take a step towards answering these questions.

Contract theory provides a principal-agent perspective, where the principal---who is the designer interested in the outcome of the learning pipeline---can create a contractual arrangement---a menu of services and compensations---with an agent. At the heart of the issue is creating contracts that incentivize the agents, who may be more knowledgeable and skilled than the principal, to take the appropriate actions. The uncertain and data-centric nature of machine learning tasks brings to light interesting sources of knowledge asymmetry between the principal and the agent and requires extensions of classical contract theory.

Consider a scenario where a firm delegates a predictive task to an ML service provider. In this context, the service provider may offer the firm either a dataset for learning or a pre-trained predictive model based on that dataset. To ensure aligned incentives, the firm needs to assess the dataset or predictive model and design the payment structure for the service provider accordingly. Since the accuracy of the model is crucial to the firm as it directly influences revenue, a natural evaluation approach involves directly measuring the accuracy of the model that the service provider produces for the firm. Several challenges arise during this evaluation process. Firstly, the firm generally only has limited data in the form of historical data or data acquired shortly after deploying the model for evaluation. So there is inherent noise in the evaluation. Secondly, the firm lacks knowledge about the baseline accuracy that is realistically achievable. This makes it harder for the firm to reward the service provider in a way that yields accuracy close to the optimal accuracy. These challenges are due to two sources of uncertainty and asymmetry that we study in this work:

\begin{itemize}[leftmargin=*, itemsep=2pt, topsep = 2pt]
\item \emph{Hidden actions} (aka Moral Hazard): Contracts must compensate the agent for his effort towards creating a good outcome for the principal. These contracts therefore must depend on observable and verifiable outcome quality, such as the true accuracy of a classifier.
This is particularly challenging in machine learning pipelines, where the accuracy of the learned model is unknown a priori and random.
The principal may be able to invest in resources, such as large test sets, that reduce this uncertainty at a cost and better evaluate the agent's effort. 
An important consideration here is determining 
whether the principal must accurately verify the outcome or instead incentivize the agent in the first place to ensure a high-quality outcome.

\item \emph{Hidden state} (aka Adverse Selection): Effective contracts use the knowledge of the best achievable outcome to incentivize the agents to work towards such outcomes. 
This is challenging in machine learning pipelines where the true error of the best model is unknown. Furthermore, generic methods that estimate the optimal accuracy
tend to use almost as many resources as are needed to learn a model of that accuracy. Here again, we must ask whether contracts exist that appropriately incentivize the agent to perform his best while knowing very little about the optimal possible accuracy.
\end{itemize}

\subsection{Our results}
We consider performance-based contracts where the agent is compensated as a function of the estimated accuracy of the learned model. The principal's utility is the true accuracy of the learned classifier minus the monetary transfer she makes to the agent. We model the agent in two delegation settings. In each we contrast the principal's utility through contracting with and without information asymmetry. Borrowing terminology from the contract theory literature, we refer to the hypothetical scenario without information asymmetry as the \emph{first-best scenario} and the resulting optimal utility as the \emph{first-best utility}, which we use as a benchmark.

\paragraph{Single-round of interaction.} We address both types of information asymmetry---hidden state and hidden action---creating contracts specific to each while also evaluating their efficacy when both asymmetries coexist. For hidden actions, our linear contract based on a single test point (Proposition~\ref{thm:LinContractApprox}) ensures at least $1 - 1/e$ fraction of the first-best utility. This guarantee continues to hold even with hidden state if the agent's sampling cost is low (Theorem~\ref{cor:advSelLinear}). 
    
    For the hidden state challenge with $n$ possible states, we derive an optimal contract by solving a convex optimization problem with $O(n^2)$ constraints. Section~\ref{sec:mediumtest} describes how this contract's optimality improves as the principal's test set size increases.

\paragraph{Multiple rounds of interaction.}
In Section~\ref{sec:multiround}, we analyze a multi-round delegation setting where the agent is uncertain about the delegated task and uses feedback over rounds to learn the principal's requirements and collect relevant samples. We define the principal's regret and establish a tight $\Theta(T^{3/4})$ bound on this regret through repeated linear contracts over $T$ rounds. {
This shows that linear contracts are also powerful approximations of optimal utility in multi-round settings.
In comparison, we obtain a strictly better regret of $O(T^{2/3})$ for multi-round first-best contracts.
}

\subsection{Related work}\label{sec:relatedWork}
    There is a rich literature on contract theory in economics \citep[see, e.g.,][]{laffont2009theory,bolton2004contract}.
    More recently, there has been work on algorithmic and statistical aspects of contract theory~\citep{carroll2015robustness, dutting2019simple, dutting2020complexity, bates2022principalagent, alon2022bayesian} which include results on approximation by simple contracts. These results hold for either finite actions or outcomes, and thus are not directly applicable to our setting, which involves infinite actions and a continuous space of outcomes. Working in such spaces requires utilizing the structure of our problem, and specifically exploiting fundamental results on statistical minimax rates.

  The \emph{pricing} of data has been considered for various purposes and considerations~\citep{bergemann2019markets,acemoglu2022too,cai2015optimum, ho2016adaptive} including in learning problems~\citep{agarwal2019marketplace,chen2022selling}. The latter study the pricing of previously collected data to incentivize the seller and buyer to be forthright about the valuation and quality of their data, respectively. We are interested instead in pricing for the purpose of incentivizing the data collecting agent to exert effort to collect data. Some of these papers also consider incentivizing high-quality data labelling by relying on multiple labellers who can be compared. We study delegation of learning in the setting of a single agent.

An adversarial perspective on the delegation problem has been considered for machine learning from the lens of interactive proofs. In this line of work~\citep{goldwasser2021interactive,chiesa2018proofs}, the principal wants to fully verify the effort of an agent who may be an adversary that is interested in getting his effort verified. While they deal with similar challenges, such as not knowing the optimal achievable error, they do not consider incentivizing the agent (via contracts and compensations) to improve the outcome.

Concurrent work by~\cite{saig2023delegated} studies a similar setting of incentivizing data collection for classification. They characterize the optimal contract for a given test set size, under the hidden action challenge, as a threshold contract when the agent has two choices for actions. They provide conditions which make the threshold contract optimal even for additional actions. They study empirically the effect of the hidden state challenge. We provide results for an arbitrary number of actions and propose a contract that is based on a single test sample that is optimal relative to what is achievable without the hidden action and hidden state challenges. We show that this contract is robust to hidden state challenges in many cases and describe other approaches of dealing with hidden state challenges outside of these cases.

\section{Model}\label{sec:dataCollectionModel}

We have a task distribution $\mathcal{D}$ representing the joint distribution over the domain and label set. The principal aims to learn a classifier $h$ that achieves high accuracy on $\mathcal{D}$, denoted by $1 - L_{\mathcal{D}}(h)$. To accomplish this, the principal delegates the task to an agent who selects the number of samples to collect and trains a classifier. We prioritize the collection of samples as the primary effort, considering it more significant than classifier training. The principal's primary objective is to incentivize high-quality data collection, leading to the development of an accurate classifier. To evaluate the performance of the model obtained through delegation, the principal possesses an independent test set consisting of independently and identically distributed (i.i.d.) points drawn from the distribution $\mathcal{D}$. The principal utilizes this test set to evaluate the learned classifier's accuracy.

As in many other delegation settings, the principal faces the hidden state and hidden action challenges when delegating learning. While the principal desires to construct a contract based on the true accuracy of the learned model, {$1 - L_{\mathcal{D}}(h)$}, they can only obtain a noisy estimate of this value using test data. Our focus is on scenarios where the size of the test dataset is not excessively large. If the test dataset is too large, it becomes more beneficial for the principal to learn a model using their own test data rather than delegating the data collection process. Even when the estimate of the learned model's accuracy has negligible noise, the principal still faces the hidden state challenge, i.e., the principal does not know how to value the accuracy since she does not know the optimal error achievable. {We use $1-\theta$ to indicate the optimal accuracy achievable on $\mathcal{D}$.}
Assigning a low payment for the model's accuracy when the optimal error, {$\theta$,} is high would result in negative agent utility, discouraging agent participation. Conversely, assigning a high payment when the optimal error is low might incentivize the agent to collect a smaller dataset than is optimal for the principal.

The delegation process begins with the principal publishing a contract which is a mapping from test accuracy to payment for the agent. Seeing the contract, the agent collects data and provides a classifier to the principal. The principal then executes the contract by evaluating the classifier on her own test set. The principal pays the agent the amount specified by the contract for the measured test accuracy. We assume that the principal can commit to a test set in advance and that this test set is not accessible to the agent until the contract is executed after the agent's data collection.

\textbf{Utilities.} Upon receiving a classifier with accuracy $a$ for the task distribution $\mathcal{D}$ and paying the agent $t$, the principal gets utility $a - \beta t$ for some constant $\beta > 0$. The agent exerts effort $\alpha$ per sample it collects. So the utility for the agent receiving payment $t$ by collecting $n$ samples is $t - \alpha n$.

\textbf{Outcome as a function of the agent's action.} We assume that when the agent collects $n$ samples,\footnote{We will consider agent's action as continuous and the true sample size is a rounding of the action.} the  classifier's observed accuracy on the principal's test set drawn from $\mathcal{D}$ is drawn from a distribution with mean $1 - \theta - \frac{d}{n^p}$ {and variance that is determined by the size of the test set.}
 The constant $d$ depends on the complexity of the training algorithm and the constant $p$ describes the rate of decay of the excess error. These rates are motivated in part by minimax statistical rates {and scaling laws}.

Even though minimax rates are typically upper bounds, we treat them as exact rates in the main body and defer the discussion on the implications of treating them as upper bounds to Appendix~\ref{sec:errorBounds}.

\begin{remark}[VC dimension bound]
    An algorithm that PAC-learns a function class $\Hcal$ with VC dimension $d$ using $n$ i.i.d.\ samples drawn from $\calD$ and returns a classifier $h$ satisfying $L_\calD(h) \leq \theta(\calD,\Hcal) + C\sqrt{d/n}$, where $\theta(\calD,\Hcal) = \min_{h \in \Hcal} L_\calD(h)$. This is minimax-optimal as there is a distribution $\calD$ such that $L_\calD(h) \geq \theta(\calD,\Hcal) + C\sqrt{d/n}$.
\end{remark}

\begin{remark}[Linear regression model]\label{rem:linregression}
    In a $d$-dimensional linear model with covariates $x_i \sim \mathcal{N}(0,\Sigma)$ and outcomes $y_i = \beta^t x_i + \epsilon_i$, where $\epsilon_i \sim \mathcal{N}(0,\sigma^2)$ for $i \in [n]$, the Ordinary Least Squares (OLS) estimator $\hat{\beta}$ satisfies the property $\EE[(x^t\hat{\beta} - y_i)^2] = \sigma^2 \fParantheses{1 + O(d/n)}$.
\end{remark}

\textbf{First-best contracts.}
As a benchmark for the best performance we can hope to achieve, we first consider the problem in an idealized setting without the hidden state and hidden action challenges. This is when the principal knows the optimal error $\theta$ and the mapping between the agent's action $n$ and the test accuracy of the resulting model is deterministic (i.e., is exactly $1 - \theta - \frac{d}{n^p}$). The optimal contract in this idealized setting is called the \emph{first-best contract}. The next proposition provides a closed form for this contract.

\begin{proposition}[First-best contract]\label{thm:firstBest}
    For any set of problem parameters $\theta \in [0,1), d, p, \alpha, \beta > 0$, the first-best contract offers payment $\alpha n^*$ when the test accuracy is at least $1 - \theta - d/{n^{*p}}$, where $n^* = (pd/\alpha \beta)^{1/(p+1)}$.
\end{proposition}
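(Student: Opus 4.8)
The plan is to reduce the contract-design problem to a one-dimensional optimization over the action the agent is induced to take, prove a matching upper bound and an achievability (lower) bound, and read off $n^*$ from a first-order condition. I would start by writing down the principal's program: she commits to a payment schedule mapping observed test accuracy to a nonnegative transfer $t$, and the agent best-responds by choosing a sample size $n$ maximizing $t(\mathrm{acc}(n)) - \alpha n$, where in the first-best world $\mathrm{acc}(n) = 1 - \theta - d/n^p$ is deterministic and known to both parties. Since the outcome is deterministic, only the value of the schedule at the single realized accuracy $\mathrm{acc}(n)$ is payoff-relevant, so a contract is effectively a choice of a target action together with the transfer paid for hitting it.

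Next I would establish the upper bound. For an arbitrary contract, let $n$ be the agent's best response. Individual rationality---the agent can always collect zero samples for utility $0$---forces $t(\mathrm{acc}(n)) - \alpha n \ge 0$, i.e. the realized transfer is at least $\alpha n$. Hence the principal's utility is at most $\mathrm{acc}(n) - \beta \alpha n = 1 - \theta - d/n^p - \alpha\beta n =: U(n)$. I would then observe that $U$ is strictly concave on $(0,\infty)$, since $U''(n) = -p(p+1)d\, n^{-p-2} < 0$, and diverges to $-\infty$ as $n\to 0^+$ and as $n\to\infty$, so it has a unique interior maximizer; solving $U'(n) = pd\, n^{-p-1} - \alpha\beta = 0$ gives $n^* = (pd/\alpha\beta)^{1/(p+1)}$. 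Therefore no contract yields the principal more than $U(n^*)$.

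Finally, for achievability I would verify that the proposed ``forcing'' contract---pay $\alpha n^*$ if the observed accuracy is at least $1-\theta-d/n^{*p}$, and $0$ otherwise---induces exactly $n^*$. Because $\mathrm{acc}(\cdot)$ is strictly increasing, clearing the threshold requires $n \ge n^*$; collecting exactly $n^*$ gives the agent utility $\alpha n^* - \alpha n^* = 0$, collecting more is strictly worse, and collecting fewer (or none) gives $0$, so $n^*$ is a best response, and under the standard convention that ties are broken in the principal's favor the agent plays it. The principal then obtains $U(n^*)$, matching the upper bound, which establishes optimality.

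I expect the main subtlety to be the upper-bound step: one must argue carefully that ``the agent's realized transfer covers his cost'' follows from individual rationality (not merely from the agent's optimality), and must handle ties among the agent's best responses---in particular between playing $n^*$ and collecting nothing---so that the stated contract is genuinely optimal rather than optimal only up to an arbitrarily small sweetener. The remaining steps are routine calculus.
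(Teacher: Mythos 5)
Your proof is correct and takes essentially the same route the paper does: the paper gives no explicit proof of this proposition, but its implicit argument (used again in the proof of Proposition~\ref{thm:LinContractApprox}) is exactly your reduction---the participation constraint pins the transfer at $\alpha n$, so the principal maximizes $1-\theta-d/n^{p}-\alpha\beta n$ over $n$, yielding $n^*=(pd/\alpha\beta)^{1/(p+1)}$, implemented by the threshold contract. Your added care about the IR-based upper bound and the tie between $n^*$ and non-participation (resolved by the standard principal-favorable tie-breaking, consistent with the paper's remark that the first-best contract gives the agent zero utility) fills in details the paper leaves implicit.
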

One way to interpret the first-best contract is that it asks the agent to  collect $n^*$ samples and compensates the agent exactly for $n^*$ samples. 
Without hidden state or hidden action, the first-best contract yields zero utility to the agent. In this idealized scenario, the principal's utility due to the first-best contract is called the \emph{first-best utility} and serves as a benchmark for comparison in our analysis of delegation. While first-best utility is used as a benchmark,  the first-best contract itself may not be optimal due to existing randomness in test accuracy (hidden action).
Additionally, each optimal error value $\theta$ leads to a different first-best contract, which is not implementable when the principal doesn't know the $\theta$ parameter exactly (hidden state). 
When dealing only with hidden action but known $\theta$, the principal's goal is to set up a contract specified for $\theta$ that deals with the randomness in the test accuracy to recover some fraction of the first-best utility. However, when both actions and states are unknown (uncertainty in $\theta$ and test accuracy) the contract must ensure good principal utility for a range of possible states $\theta$.

\textbf{Linear contracts.}
As opposed to first-best contracts that can be quite complex, \emph{linear contracts} are simple contracts that compensate an agent by a linear function of the test accuracy. That is, a $c$-linear contract for parameter $c \in \mathbb{R}^+$ assigns payment $T_c(a) = c \times a$ when the test accuracy is $a$.

Linear contracts must have non-negative parameter $c$, since the principal cannot make negative payments to the agent.

\section{Optimality of Linear Contracts}
\label{sec:dataCollectionResults}
In this section, we aim to find near-optimal contracts in the realistic scenario with hidden state and hidden actions, recognizing that the first-best contract may not be optimal. Our main  result is that 
{a linear contract compensating the agent based on the test (and not true) accuracy is approximately optimal across all possible contracts for the principal.}
Moreover, the slope of the linear contract has an explicit value that is the same across a wide range of $\theta$ making it possible to deal with both hidden state and hidden state challenges.

A crucial advantage of our linear contract is that it works with any unbiased estimator of the accuracy of the learned model. Therefore, even a test set of size one suffices to enact this contract. We state our main results in this section and defer their formal proofs to Appendix~\ref{app:proofs}.

Consider the hidden action (but known state) challenge where the principal knows $\theta$ but not the random mapping from the agent's action $n$ to test accuracy. This mapping has a mean $1-\theta-\frac{d}{n^p}$) and a variance dependent on the test set size. The variance is finite but possibly arbitrarily large. This setting includes the delegation problem where the principal has as little as just a single sample $x\sim \calD$ in her test set. Furthermore, we assume that beyond knowing the mean 
of the distribution of the test accuracy $1-\theta-\frac{d}{n^p}$, the distribution can be arbitrary and unknown to the principal.

Our main result is that we can design an approximately optimal linear contract.
Furthermore, under a wide range of problem parameters $\theta$, the principal does not even need to know the optimal error to construct this contract. This allows us to deal with both hidden state and hidden action challenges.
Our results in fact show a stronger comparison, that  linear contracts approximate not just the optimal utility but also the first-best utility. This is quite a strong guarantee as there is often no contract that can achieve the first-best utility in presence of the hidden action challenge.

Before we state our main theorem, we start with the following proposition which deals only with the hidden action challenge while assuming that optimal error $\theta$ is known to the principal.
Our main result in Theorem~\ref{cor:advSelLinear} follows from this proposition and shows that the linear contract in this proposition is also a good choice in more general settings.

\begin{proposition}[Linear contracts are approximately optimal when optimal error is known]\label{thm:LinContractApprox}
For any set of problem parameters $\theta \in [0,1), d, p, \alpha, \beta > 0$, if the principal knows $\theta$ {(but not the distribution of the test error)} she can construct a linear contract that brings an expected utility that is at least $1 - 1/e$ times the first-best utility. {Furthermore this contract only requires a single test sample.}

The linear contract $c^*$ that achieves this approximately optimal utility is the following:
\[c^* = \max \fParantheses{\frac 1 {\beta(p+1)^{\frac{p+1}{p}}}, \frac{\alpha d^{\frac 1 p}}{p} \cdot \fParantheses{\frac{p+1}{1-\theta}}^{\frac{p+1}{p}}}.\]
At a finer level, this linear contract approximates the first-best utility by a factor of 
\[1 - \frac{1}{(p+1)^{\frac{p+1}{p}}} {\geq 1- \frac 1e}.\]
\end{proposition}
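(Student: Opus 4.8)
The plan is to put both sides into closed form and then split on which of the two terms defines $c^\ast=\max(c_1,c_2)$, writing $c_1:=\frac{1}{\beta(p+1)^{(p+1)/p}}$ and $c_2:=\frac{\alpha d^{1/p}}{p}\big(\frac{p+1}{1-\theta}\big)^{(p+1)/p}$.

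\textbf{Step 1: the two quantities and the agent's response.} First I would record, from Proposition~\ref{thm:firstBest} and the identity $(n^\ast)^{p+1}=pd/(\alpha\beta)$, that $d/(n^\ast)^p=\alpha\beta n^\ast/p$, so $U_{\mathrm{FB}}=1-\theta-d/(n^\ast)^p-\beta\alpha n^\ast=1-\theta-\tfrac{p+1}{p}\,\alpha\beta n^\ast$. Under a $c$-linear contract the agent's expected payment is $c$ times the \emph{mean} test accuracy $1-\theta-d/n^p$, so the variance (indeed the whole shape) of the test-accuracy distribution is irrelevant and a single test point suffices; the agent maximizes the concave map $c(1-\theta-d/n^p)-\alpha n$, with best response $n_c=(cpd/\alpha)^{1/(p+1)}$. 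Substituting the first-order condition $\alpha n_c=cpd/n_c^p$ into the agent's value gives $c\big[(1-\theta)-(p+1)d/n_c^p\big]$, so the participation (IR) constraint holds exactly when $c\ge c_2$ (the $c$ at which $d/n_c^p=(1-\theta)/(p+1)$), and a short computation confirms this threshold is the stated $c_2$. When IR holds the principal's expected utility is $U_P(c)=(1-\beta c)\big(1-\theta-d/n_c^p\big)$; otherwise the agent does not participate and the principal gets $0$.

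\textbf{Step 2: regime $c_1\ge c_2$, so $c^\ast=c_1$.} Here $c_1$ is IR-feasible, and I would compute $n_{c_1}=n^\ast(p+1)^{-1/p}$, hence $d/n_{c_1}^p=(p+1)\,d/(n^\ast)^p=\tfrac{p+1}{p}\alpha\beta n^\ast$, so that $1-\theta-d/n_{c_1}^p=U_{\mathrm{FB}}$ exactly. Since $1-\beta c_1=1-(p+1)^{-(p+1)/p}$, this yields the clean identity $U_P(c^\ast)=\big(1-(p+1)^{-(p+1)/p}\big)\,U_{\mathrm{FB}}$, i.e.\ the ``finer level'' factor with equality.

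\textbf{Step 3: regime $c_1< c_2$, so $c^\ast=c_2$, plus the $e$-bound.} Now IR binds at $c_2$, so the expected payment is $\alpha n_{c_2}$ and $U_P(c_2)=\big(1-\theta-d/n_{c_2}^p\big)-\alpha\beta n_{c_2}=f(n_{c_2})$ with $f(n):=1-\theta-d/n^p-\alpha\beta n$ the first-best objective, maximized at $n^\ast$ with $f(n^\ast)=U_{\mathrm{FB}}$. Introducing the scalar $r:=\big(d/(n^\ast)^p\big)\big/\big(d/n_{c_2}^p\big)$, the hypothesis $c_1<c_2$ becomes $r>1/(p+1)$ (using $\alpha\beta n^\ast=p\,d/(n^\ast)^p$) and $U_{\mathrm{FB}}>0$ (the \regularity) forces $r<1$. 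Rewriting, $f(n^\ast)=(1-\theta)(1-r)$ and $f(n_{c_2})=\tfrac{p(1-\theta)}{p+1}\big(1-r^{(p+1)/p}\big)$, so the ratio is $\phi(r):=\tfrac{p}{p+1}\cdot\frac{1-r^{(p+1)/p}}{1-r}$. Since $\phi(1/(p+1))=1-(p+1)^{-(p+1)/p}$ and $\phi(1^-)=1$, it suffices to show $\phi$ is nondecreasing on $(0,1)$; this follows from checking that the numerator of $\phi'$ equals (up to a positive factor) $1+(q-1)r^q-qr^{q-1}$ with $q=(p+1)/p>1$, a function that vanishes at $r=1$ and has negative derivative $q(q-1)r^{q-2}(r-1)$ on $(0,1)$. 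Hence $U_P(c^\ast)=f(n_{c_2})\ge\big(1-(p+1)^{-(p+1)/p}\big)U_{\mathrm{FB}}$ on this regime too (and the two regimes agree continuously at $r=1/(p+1)$). Finally, $(p+1)^{(p+1)/p}\ge e$ because $\tfrac{p+1}{p}\ln(p+1)\ge 1$ is the standard inequality $\ln t\ge 1-1/t$ at $t=p+1>1$, which gives the $1-1/e$ bound.

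\textbf{Expected main obstacle.} The closed-form bookkeeping in Step~1 is routine. The delicate points are the case split itself --- verifying that ``which term wins in the $\max$'' is exactly the dichotomy $r\lessgtr 1/(p+1)$ and that at $c^\ast=c_2$ the agent participates (tie-breaking at zero utility, and the implicit requirement that the induced accuracy be meaningful) --- together with the monotonicity of $\phi$ for a \emph{non-integer} exponent $q$, which is the one spot needing a genuine (if short) analytic argument rather than algebra.
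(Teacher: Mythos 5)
Your proposal is correct and follows essentially the same route as the paper's proof: identify $c_2$ as the participation threshold via the agent's best response $n_c=(cpd/\alpha)^{1/(p+1)}$, split on which term of the max is active, and in the second case reduce the ratio to $\frac{p}{p+1}\cdot\frac{1-t^{(p+1)/p}}{1-t}$ (your $r$ is exactly the paper's $t$) with the case condition $t\geq 1/(p+1)$ and monotonicity in $t$. Your write-up actually supplies two details the paper only asserts --- the proof that $\phi$ is nondecreasing on $(0,1)$ and the inequality $(p+1)^{(p+1)/p}\geq e$ via $\ln t\geq 1-1/t$ --- so no changes are needed.
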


{Let us first note that previous work~\citep{alon2022bayesian,dutting2019simple} has provided constant approximation guarantees 
but is limited to settings where the agent's action set is a finite set or where the ratio of the maximum and minimum reward for the principal is bounded by $H$.  In the former case, an approximation ratio of $1/2$ is obtained and in the latter the ratio is $1/2\log(H)$. Neither of these conditions hold in our settings, as the action is the number of samples collected and is unbounded and the reward can take any value in $(0,1)$. Instead, we use the structure of first-best contract (Proposition~\ref{thm:firstBest}}), the linearity of contracts, and the structure of the utility functions to obtain this $1-1/e$ approximation guarantees.

\begin{proof}[Proof sketch of Proposition~\ref{thm:LinContractApprox}]
    The full details are deferred to the appendices; here we provide some intuition and a proof sketch. 
    Underlying the proof is the linearity of expectation and the fact that the agent is expectation-maximizing. Under a linear contract $c$, the expectation-maximizing agent aims to maximize $\mathbb{E}[c \cdot a(n) - \alpha n] = c \cdot \mathbb{E}[a(n)] - \alpha n$, where $a(n)$ is the test-set accuracy of a model trained on $n$ samples drawn from an unknown distribution with mean $1 - \theta - d/n^p$. The only  distribution-dependent quantity in this maximizing objective is the expected accuracy $\mathbb{E}[a(n)]=1 - \theta - d/n^p$. So the agent's action and hence the principal's contract design only depends on the expectation of the test accuracy and not on the exact distribution of the test accuracy. Next we sketch a proof for the approximation result and use the structured way the expected accuracy depends on the number of samples drawn.

    Note that $c^*$ is the maximum of two terms.  Let us denote these terms by $c_1,c_2$.
    Given a linear contract with parameter $c$, the agent's best response is to choose $n$ so as to maximize $u(n;c) = c \fParantheses{1 - \theta - d/n^p} - \alpha n$. The maximizing value is $n(c) = \fParantheses{{cdp}/{\alpha}}^{\frac{1}{p+1}}$. By setting $c$ large enough, we have $u(n(c), c) \geq 0$ where $c_2$ is the threshold above which this holds. So the value of $c_2$ is set to ensure the agent gets non-negative utility from participating.

When $c_1 \geq c_2$, $c_1$ satisfies the participation constraint. By computing the principal's utility from the linear contract $c_1$ using the expression for the agent's best response, we see that it is $1-\beta c_1$ times the first-best utility. Moreover, we have $1 - \beta c_1 = 1 - 1/(p+1)^{\frac {p+1} p}$. It turns out the same upper bound holds for the approximation ratio of the linear contract $c_2$ to the first-best utility when $c_2 \geq c_1$. This upper bound is decreasing in $p$ and the limit as $p \rightarrow 0$ is $1-1/e$.
\end{proof}

Importantly, by inspecting the contract in Proposition~\ref{thm:LinContractApprox}, we see that in many cases it does not depend on problem-specific parameters like the optimum error. This makes $c^*$ deployable in practice.

The optimal-error-parameter-agnostic linear contract is appropriate when the cost per sample collection is small enough and when the optimal error is low enough. As a result, when $\alpha$ is small, we can relax the assumption that the principal knows the exact optimum error $\theta$ to that the principal knows that $\theta$ lies in a certain range. Moreover, even under this relaxation, linear contracts are still approximately optimal. This is stated as the following theorem.

\newcommand{\thetahigh}{\overline{\theta}}

\begin{theorem}[Main result]\label{cor:advSelLinear}
For any $d, p, \beta > 0$, consider the linear contract $\bar{c} = \frac 1 {\beta(p+1)^{\frac{p+1}{p}}}$. For any $\thetahigh \in [0,1)$, suppose the optimum error $\theta$ is any value in $[0, \thetahigh)$ and that $0 < \alpha \leq \frac{p}{\beta d^{1/p}} \fParantheses{\frac{1-\thetahigh}{(p+1)^2}}^{\frac{p+1}{p}}$. Then, $\bar{c}$ has utility at least $(1 - 1/e)$ times the first-best utility. 
\end{theorem}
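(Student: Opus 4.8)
The strategy is to reduce the claim to Proposition~\ref{thm:LinContractApprox}, whose hard analytic work has already been done, and to check that under the stated bound on $\alpha$, the optimal-error-agnostic contract $\bar c$ is exactly the contract $c^*$ produced by that proposition for \emph{every} $\theta\in[0,\thetahigh)$. Recall that Proposition~\ref{thm:LinContractApprox} gives $c^*=\max(c_1,c_2)$ with $c_1 = \frac{1}{\beta(p+1)^{(p+1)/p}}$ (which is precisely $\bar c$) and $c_2 = \frac{\alpha d^{1/p}}{p}\bigl(\frac{p+1}{1-\theta}\bigr)^{(p+1)/p}$, and it certifies that this $c^*$ earns at least $(1-1/e)$ times the first-best utility whenever $\theta$ is known. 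So it suffices to show that the hypothesis $0<\alpha\le \frac{p}{\beta d^{1/p}}\bigl(\frac{1-\thetahigh}{(p+1)^2}\bigr)^{(p+1)/p}$ forces $c_2\le c_1$, i.e. $\bar c = c^*$, for all $\theta<\thetahigh$.

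First I would write out the inequality $c_2\le c_1$ explicitly:
\[
\frac{\alpha d^{1/p}}{p}\left(\frac{p+1}{1-\theta}\right)^{\frac{p+1}{p}} \;\le\; \frac{1}{\beta(p+1)^{\frac{p+1}{p}}}.
\]
Solving for $\alpha$, this is equivalent to $\alpha \le \frac{p}{\beta d^{1/p}}\cdot\frac{(1-\theta)^{(p+1)/p}}{(p+1)^{2(p+1)/p}} = \frac{p}{\beta d^{1/p}}\bigl(\frac{1-\theta}{(p+1)^2}\bigr)^{(p+1)/p}$. Since $\theta<\thetahigh$ implies $1-\theta > 1-\thetahigh > 0$ and the map $\theta\mapsto(1-\theta)^{(p+1)/p}$ is decreasing, the worst case is $\theta\to\thetahigh$, and the hypothesis is exactly the statement that $\alpha$ is below this worst-case threshold. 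Hence $c_2\le c_1$ for every admissible $\theta$, so $c^*=c_1=\bar c$.

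Second, I would invoke Proposition~\ref{thm:LinContractApprox}: for each fixed $\theta\in[0,\thetahigh)$, deploying the linear contract $c^*(\theta)=\bar c$ yields expected principal utility at least $(1-1/(p+1)^{(p+1)/p})\ge 1-1/e$ times the first-best utility for that $\theta$. Since $\bar c$ does not depend on $\theta$, the single contract $\bar c$ simultaneously achieves this guarantee for the entire range $[0,\thetahigh)$, which is the claim. One small point to address for rigor: the principal no longer knows $\theta$, but since $\bar c$ is the same contract regardless, the agent's best response $n(\bar c)=(\bar c d p/\alpha)^{1/(p+1)}$ and the realized expected utility are each well-defined given the true $\theta$, and the bound from the proposition applies pointwise in $\theta$ — the proposition's conclusion is about the utility generated by a specific numerical contract, not about the principal's knowledge.

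The main obstacle is genuinely just the algebraic bookkeeping in the equivalence $c_2\le c_1 \iff \alpha \le (\text{threshold})$: one must track the exponents $(p+1)/p$ and the $(p+1)^{2}$ versus $(p+1)^{(p+1)/p}$ factors carefully, and confirm that raising both sides to the power $p/(p+1)$ (a monotone operation since $p/(p+1)>0$) produces exactly the stated bound with the $(p+1)^2$ in the denominator. There is no deep difficulty beyond verifying that the constant in the hypothesis matches the constant forced by $c_1\ge c_2$ at $\theta=\thetahigh$; everything else is inherited from Proposition~\ref{thm:LinContractApprox}.
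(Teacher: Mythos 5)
Your proposal is correct and is exactly the argument the paper intends: the hypothesis on $\alpha$ is precisely the condition that makes the second term in the max defining $c^*$ no larger than the first for every $\theta\in[0,\thetahigh)$ (with the worst case at $\theta\to\thetahigh$), so $c^*=\bar c$ is $\theta$-independent and the $1-1/(p+1)^{(p+1)/p}\ge 1-1/e$ guarantee of Proposition~\ref{thm:LinContractApprox} applies pointwise in $\theta$. The paper gives no separate proof of this theorem beyond noting that it follows from the proposition, and your algebra verifying the equivalence $c_2\le c_1\iff\alpha\le\frac{p}{\beta d^{1/p}}\bigl(\frac{1-\theta}{(p+1)^2}\bigr)^{(p+1)/p}$ checks out.
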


Note that $\bar{c}$ is constructed based on $p$ (error decay rate) and $\beta$ (how the principal values accuracy relative to payment). The principal knows these quantities. In contrast, the optimal contract requires additional knowledge, such as $\theta$ (optimum error) and $\alpha$ (agent's cost per sample). The theorem demonstrates a simple contract that requires less knowledge but remains approximately optimal in utility.

\section{Extensions}\label{sec:mediumtest}
\textbf{Medium test set regime.}
In our analysis, we have examined the impact of the hidden action challenge when dealing with a small test set size. The significance of the hidden action challenge diminishes as the test set size increases, as the principal can obtain highly accurate estimates of the model's accuracy. However, when the test set becomes too large, delegation loses its value since the principal can independently learn an accurate model without delegation. Is there a regime in which the test set size is large enough for hidden action to not be significant while also being small enough for the principal to benefit from delegating data collection? In this section, we demonstrate the existence of such a regime, referred to as the ``medium test set regime.'' Later, we outline how we can capitalize on the larger size of the test set to achieve stronger results.

The sample complexity for learning an $\epsilon$-optimal model is $\Theta(d/\epsilon^2)$. In particular, this bound is linear in the training algorithm's complexity which can be problematic when using highly complex training algorithms. We say that the medium test set regime exists, if the sample complexity for hidden action is significantly smaller than 
$\Theta(d/\epsilon^2)$,  
where $\epsilon$ captures the significance level of hidden action which we will make precise in the following definition.

\begin{definition}[Insignificance of hidden action at level $\epsilon$]
    In a finite test set setting with hidden action, for any optimal error parameter $\theta$, let $\mathrm{OPT}$ denote the optimal expected utility of contracting. We say that hidden action is insignificant at level $\epsilon$, for any $\epsilon > 0$, if the expected utility of the first-best contract based on $\theta$ in this setting is at least $\mathrm{OPT} - \epsilon$.
\end{definition}
We next state a theorem giving the sample complexity of the principal's test set to achieve insignificance of the hidden action.
The sample complexity stated in the theorem is logarithmic in $d$ while learning would have required a number of samples linear in $d$. This demonstrates the existence of a medium test set regime where it is possible to employ delegation without considering hidden action.

\begin{theorem}[Sample complexity for insignificant hidden action]\label{thm:sampleCompNoMoralHazard}
    For any $\epsilon > 0$, if the principal has a test set of size $O\left (\frac 1 {\epsilon^2} \log \frac d {\epsilon} \right)$, then hidden action is insignificant at the level $\epsilon$. 
\end{theorem}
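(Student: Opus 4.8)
The plan is to show that with a test set of size $m = O\!\left(\frac{1}{\epsilon^2}\log\frac{d}{\epsilon}\right)$, the principal can use the first-best contract (for the known $\theta$) and lose at most $\epsilon$ in expected utility relative to the true optimum OPT. The first-best contract pays $\alpha n^*$ when the measured accuracy meets the threshold $1-\theta-d/n^{*p}$ and pays $0$ otherwise. The sources of loss are twofold: (i) a \emph{false-negative} event, where the agent collects the prescribed $n^*$ samples but the empirical accuracy on the test set falls below the threshold due to finite-sample noise, so the agent is not paid and hence has no incentive to comply; and (ii) the agent may prefer a slightly different action because the threshold payment, smeared by test-set noise, no longer exactly implements $n^*$. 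The key tool is a concentration bound: since the empirical accuracy on $m$ i.i.d.\ test points is an average of bounded $[0,1]$ random variables with mean $1-\theta-d/n^p$, Hoeffding's inequality gives that the empirical accuracy deviates from its mean by more than $t$ with probability at most $2\exp(-2mt^2)$.

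The main steps, in order. First I would fix the target action $n^*=(pd/\alpha\beta)^{1/(p+1)}$ from Proposition~\ref{thm:firstBest} and note that the first-best utility is of constant order in the relevant parameters, so an additive $\epsilon$ slack is meaningful. Second, I would modify the first-best threshold slightly downward, to $1-\theta-d/n^{*p} - \gamma$ for a carefully chosen margin $\gamma = \Theta(\epsilon)$; this ensures that an agent who plays $n^*$ is paid with probability at least $1-2\exp(-2m\gamma^2)$. Third, I would bound the agent's incentive to deviate: to collect substantially fewer than $n^*$ samples, the agent must push the expected accuracy $1-\theta-d/n^p$ below the (shifted) threshold, which costs him the payment $\alpha n^*$ with high probability; comparing the cost savings $\alpha(n^*-n)$ against the expected payment lost, one shows the best response remains within a small neighborhood of $n^*$, so the realized accuracy the principal obtains is $1-\theta-d/n^{*p} - O(\epsilon)$. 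Fourth, I would choose $m$ so that $2\exp(-2m\gamma^2) \le \epsilon/(\text{payment scale})$, i.e.\ $m = \Theta\!\left(\frac{1}{\epsilon^2}\log\frac{1}{\epsilon \cdot (\text{scale})}\right)$; the $\log d$ enters because the payment scale $\alpha n^*$ and the accuracy gap $d/n^{*p}$ both involve $d$ polynomially, so $\log(\text{scale}) = \Theta(\log(d/\epsilon))$, yielding the claimed bound. Finally, I would assemble these pieces: OPT is by definition at most the best achievable, and the shifted first-best contract achieves utility at least (first-best utility) $- O(\epsilon)$, hence at least OPT $- O(\epsilon)$; rescaling $\epsilon$ by a constant gives the theorem.

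The hard part, I expect, is step three — controlling the agent's deviations precisely enough. Because the action space is continuous and unbounded and the payment is a hard threshold convolved with an arbitrary (only-mean-known, but here on a finite test set, sub-Gaussian-after-Hoeffding) noise distribution, the agent's expected-utility landscape is not obviously single-peaked, and I must rule out both small local deviations and a global jump to collecting very few samples (taking the gamble that noise alone clears the threshold) or far more samples than $n^*$. The clean way around this is to exploit that the induced payment function is monotone in $n$ and that the marginal expected payment for increasing $n$ near $n^*$ is governed by the density of the noise at the threshold; choosing $\gamma$ to place $n^*$ where this marginal trade-off is favorable, together with the crude bound that overshooting $n$ only wastes the agent's own money, pins the best response into an $O(\epsilon)$-window. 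One more subtlety to handle carefully: we are treating minimax rates as exact, so $\E[a(n)] = 1-\theta-d/n^p$ holds with equality, which is what makes the threshold argument tight; the upper-bound version is deferred to the appendix as the paper notes.
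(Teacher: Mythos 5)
Your overall strategy---use the threshold (first-best) contract, apply Hoeffding to the empirical test accuracy, show the agent's best response stays within an $\epsilon$-window of the target so the principal loses only $O(\epsilon)$, and pick $m = \Theta(\epsilon^{-2}\log(d/\epsilon))$ so the failure probability is small relative to the payment scale $\alpha n^*$ (which is polynomial in $d$, giving the $\log d$)---matches the paper's. But there are two substantive divergences, one of which is a genuine gap. First, you modify the contract by lowering the threshold by $\gamma = \Theta(\epsilon)$ so that a compliant agent is paid with high probability. The definition of insignificance in the paper is specifically that \emph{the first-best contract based on $\theta$} achieves $\mathrm{OPT}-\epsilon$, so proving near-optimality of a shifted contract does not literally establish the claim. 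The paper keeps the first-best threshold $\ell^*$ unchanged and instead accepts that the agent's best response will \emph{overshoot}: the agent collects somewhat more than $n^*$ samples so that the expected error sits strictly below the threshold and the payment is received with high probability. The principal still pays only $t^* = \alpha n^*$ and gets accuracy at least as good, so no margin is needed.

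Second, and more importantly, the step you yourself flag as the hard part---ruling out deviations over the continuous, unbounded action space---is left as a heuristic appeal to monotonicity and to ``the density of the noise at the threshold,'' neither of which you can control since the noise distribution is only constrained through its mean and Hoeffding. The paper's mechanism here is a clean pairing argument: for every $\Delta > \epsilon$, compare the action targeting expected error $\ell^* + \Delta$ (undershooting, i.e., $n_1 = n(\ell^*+\Delta)$ samples) against the action targeting $\ell^* - \Delta$ (overshooting, $n_2 = n(\ell^*-\Delta)$ samples). By Hoeffding the former yields expected payment at most $2t^*e^{-2m\epsilon^2}$ and the latter at least $t^*(1-2e^{-2m\epsilon^2})$, while the extra sampling cost $\alpha(n_2-n_1)$ is controlled by the explicit form $n(\ell)=(d/(\ell-\theta))^{1/p}$; for $m \gtrsim \epsilon^{-2}\log(d/\epsilon)$ every undershooting action is strictly dominated by its overshooting partner, so no best response targets error worse than $\ell^*+\epsilon$. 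This avoids any claim about the shape of the agent's utility landscape. To complete your proof you would need to supply an argument of this kind (or a genuine single-peakedness proof, which does not follow from the stated assumptions).
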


\subsection{Optimal contracts for hidden state}\label{sec:hiddenStateContract}
By ignoring hidden action in the medium test set regime, we can hope to design contracts with stronger guarantees. Previously we were able to design contracts with high utility when the optimal error lies in a particular range given in Theorem~\ref{cor:advSelLinear}. By ignoring hidden action, we can design contracts with utility guarantees for when the optimal error lies in any arbitrary set. When the principal holds a finitely supported prior belief over the optimal error value, we show how to compute the optimal contract by setting up a convex optimization problem. We also describe some qualitative properties of the optimal contract in this setting.  

When we ignore the hidden action challenge, we can assume that the observed accuracy is deterministic in the agent's action. That is, when the agent collects $n$ samples, the observed accuracy is $a(n,\theta) = 1 - \theta - d/{n^p}$. We assume that the principal holds a prior belief on the optimal error but does not know the exact value. The agent knows more about the optimal error since he collects data that informs him more about the optimal error. We assume that the agent knows the exact optimal error. We start by analyzing the optimal contract in this setting. Later in Section~\ref{sec:DataLearn}, we discuss how to design contracts in the more realistic setting of the agent learning the optimal error instead of knowing this value exactly. And we show that the utility guarantees by making the perfectly aware agent assumption still hold approximately in the more realistic case with a learning agent.

Let us analyze the optimization problem for computing the optimal contract. Let the finite support of the prior over optimal error be $\{\theta_1, \ldots, \theta_N\}$. The principal puts forth a contract of accuracy-payment pairs  $\{(a_i, t_i) : i \in [N]\}$ with the pair $i$ intended for when the optimal error is $\theta_i$.\footnote{This is implied by the revelation principle that states that, with hidden state, any delegation mechanism is equivalent to an \emph{incentive compatible} mechanism where all agents inform their private information to a planner who then recommends actions.} Let us denote the expected accuracy from collecting $n_i$ when optimal error is $\theta_i$ by $a_i = a(n_i, \theta_i)$. Here $n_i$ is the number of samples the agent would collect to achieve accuracy $a_i$ when optimal error is $\theta_i$. The principal optimizes over $(n_i,t_i)_{i \in [N]}$. The constraints of the optimization problem for the principal's contract design for hidden state are one of two types. The first type of constraint is the participation constraint, which ensures that the agent is adequately compensated for his effort when he chooses the contract intended for the optimal error. For each $i \in [N]$, the participation constraint $(\text{PC}_i)$ can be expressed as $\alpha n_i \leq t_i$, where $\alpha$ represents the compensation rate.

The second type of constraint is the incentive compatibility constraint to ensure that the agent chooses the option intended in the contract for the optimal error. For any $i,j \in [N]$, the corresponding incentive compatibility constraint is that when the optimal error is $\theta_i$, the utility of choosing $(a_j,t_j)$ is worse for the agent than choosing $(a_i,t_i)$. The number of samples the agent would choose to achieve $a_j$ accuracy under optimal error $\theta_i$ is $n_{ij}$ such that $a_j = a(n_{ij}, \theta_i)$.\footnote{Note that all accuracies cannot be achieved for all optimal errors.  If no such $n_{ij}$ exists, an incentive compatibility constraint is not needed.} The constraint $(\text{IC}_{ij})$ is $t_j - \alpha n_{ij} \leq t_i - \alpha n_i$. Due to the structure of $a(n,\theta)$, the IC constraints are convex. 
The principal's expected utility which it maximizes is $\sum_{i=1}^N \nu(\theta_i) (a_i - \beta t_i)$. So the contract design problem is the following optimization problem:
\vspace{-0.3in}
{  
  \setlength{\abovedisplayskip}{6pt}
  \setlength{\belowdisplayskip}{\abovedisplayskip}
  \setlength{\abovedisplayshortskip}{0pt}
  \setlength{\belowdisplayshortskip}{3pt}
  \begin{center}
\begin{equation}
\begin{aligned}
\min_{(n_i,t_i)_{i=1}^N} \quad & \sum_{i=1}^N \nu(\theta_i) (a_i - \beta t_i)\\
\textrm{s.t.} \quad & \alpha n_i \leq t_i, \quad i \in [N]\\
  &  t_j - \alpha n_{ij} \leq t_i - \alpha n_i, \quad i,j \in [N]  \\
  & n_i, t_i \geq 0, \quad i \in [N].
\end{aligned}
\tag{Opt}\label{opt:advSelection}
\end{equation}
\end{center}
}%

\textbf{Qualitative insights on the optimal contract.} We derive the following insights (see Figure~\ref{fig:optContractProperties}) when there are two values for the optimal error, $\theta_1 < \theta_2$, in the Appendix~\ref{app:closedFormOpAdvSel}. These properties also hold more generally for finitely supported beliefs and have been studied for classical contract design for many other delegation problems~\cite{laffont2009theory}.

\begin{itemize}[itemsep=0pt, topsep=0pt, leftmargin = *]
    \item \emph{Decreased utility}. The principal gets lower utility than the first-best utility and this utility decreases as $\Delta \theta = \theta_2 - \theta_1$ increases.
    
    \item \emph{Information rent.} In the first-best contract, the agent gets no more payment than to compensate his effort. That is, $t = \alpha n$. Under hidden state, for problems with lower optimal error, the agent gets positive utility. This information rent is to incentivize the agent to not pretend the problem is harder and exert lower effort to achieve an accuracy that requires more effort if the problem was harder. 

    \item \emph{Downward distortion.} The first-best contract calls for the agent to collect a particular number of samples regardless of the optimal error. Under hidden state, when the problem is harder, agents are asked to collect fewer samples compared to the first-best contract. When the problem is the easiest in the support, the agent is asked to collect the same number of samples as in the first contract.
\end{itemize}

\begin{figure}[!ht]
    \subfloat[{\small Information rent vs $\Delta \theta$}\label{subfig:rentVsGap}]{%
      \includegraphics[width=0.3\textwidth]{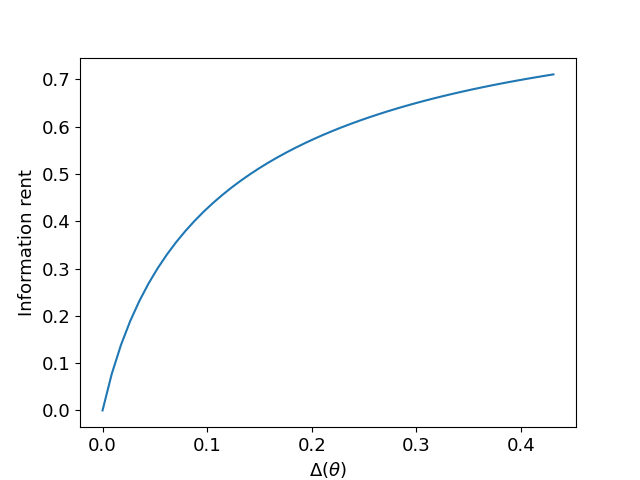}}
    \hfill
    \subfloat[{\small Downward distortion magnitude vs agent's $\Delta \theta$ }\label{subfig:optVsAlpha}]{%
      \includegraphics[width=0.3\textwidth]{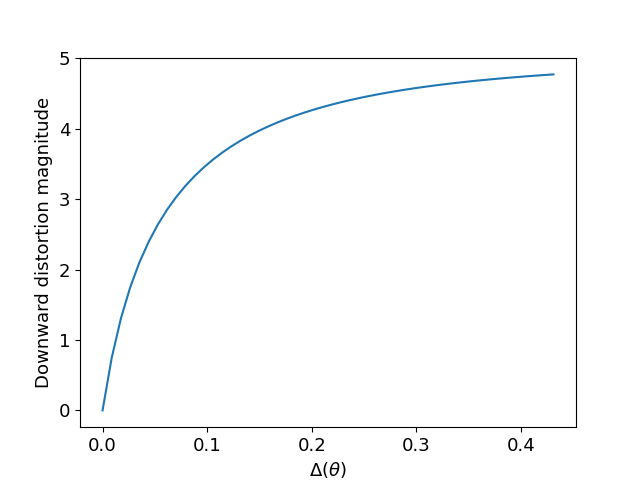}
    } 
    \hfill
     % \centering
    \subfloat[{\small Optimal utility vs $\Delta \theta$}\label{subfig:optVsGap}]{%
      \includegraphics[width=0.3\textwidth]{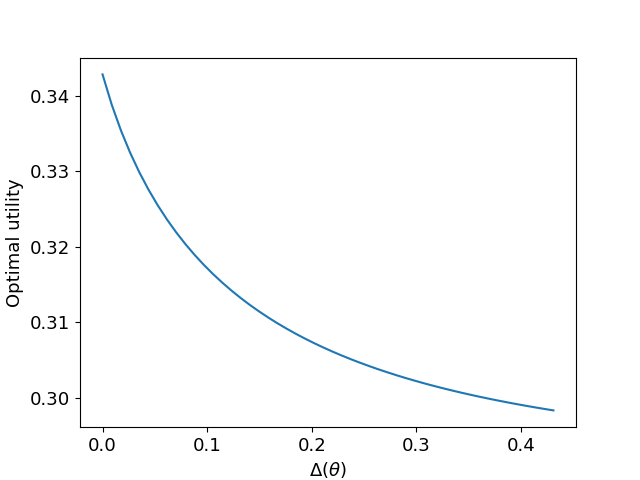}
    }
    \caption{Variation of utility, information rent and downward distortion magnitude with the gap in optimal error values ($\Delta \theta$). Information rent is the utility the agent makes under the lower optimal error problem. Downward distortion magnitude is how many fewer samples the agent collects compared to the first-best contract under the higher optimal error problem.}
    \label{fig:advSelOptVsProbParams}
  \end{figure}\label{fig:optContractProperties}

\subsection{Designing contracts against state-learning agents}
\label{sec:DataLearn}

In Section~\ref{sec:hiddenStateContract} and particularly \ref{opt:advSelection},  we assumed perfect knowledge of the hidden state ($\theta$) by the agent. However, in reality, the agent does not know the optimal error beforehand. Instead, as the agent executes the contract, he learns more about the optimal error and adapts his actions accordingly. To design a contract for such a state-learning agent, the principal would need to predict the agent's response to the contract. However, this is challenging for arbitrary contracts since the principal would require knowledge of the agent's exact learning strategy, which is often unreasonable. Therefore, we focus on analyzing simple contracts for which we can easily derive the agent's response. We demonstrate numerically that the utility achieved with these simple contracts is close to the utility we previously derived for state-aware agents, which we refer to as ``state-aware utility.'' 
This provides evidence that qualitative insights we derived about the state-aware utility in Section~\ref{sec:dataCollectionModel} and Section~\ref{sec:dataCollectionResults} are applicable in the more realistic case of a state-learning agent.
 We focus on the case where the optimal error can take one of two possible values, $\theta_1 < \theta_2$, but these design principles also extend to more possible values of the optimal error. The simple contract we consider, which we call the \emph{state-learning contract}, is the best of two simple contracts: optimal \emph{pooling} and \emph{separating} contracts. 

\textbf{Separating contract.} Separating contracts allow the agent to perfectly infer the hidden state while executing the contract. These are incentive-compatible contracts that ask agents to collect $n_1, n_2$ samples under optimal errors $\theta_1 < \theta_2$ respectively. Additionally, $n_1, n_2$ are such that the agent can successfully infer the optimal error after collecting $\min(n_1,n_2)$ samples. The agent's response to this contract would be to first collect $\min(n_1,n_2)$ samples and decide whether to collect more depending on the inferred optimal error. The agent's successful inference of the optimal error makes computing optimal separating contracts similar to the contract design problem against a state-aware agent, which was solving the optimization problem~\ref{opt:advSelection}. The new optimization problem that yields optimal separating contracts has the same objective and constraints as~\ref{opt:advSelection} with the added constraint that $n_1, n_2 > n_0$ for some $n_0$ that we will describe soon. The additional constraint ensures that the agent knows the optimal error (with high probability) after collecting $\min(n_1,n_2)$ samples.

To determine the value of $n_0$, we rely on assumptions about the agent's learning strategy. We assume that the agent can distinguish between $\theta_1,\theta_2$ with high probability using $k/(\Delta \theta)^2$ samples. Here $\Delta \theta = \theta_2 - \theta_1$ and $k$ is a constant reflecting the degree of assumption made about the agent's efficiency. A lower value of $k$ is a stronger assumption, assuming a more efficient agent. This assumption on the agent's learning strategy is more reasonable compared to assuming precise knowledge of the agent's learning strategy.

When $n_0$ is small enough that the added constraint $n_1,n_2 > n_0$ is not active, the state-learning agent is behaving exactly as the state-aware agent, so our results from Section~\ref{sec:hiddenStateContract} apply. On the other hand, if $n_0$ is large (which happens when $\Delta \theta$ is small) the additional constraint becomes too restrictive and the utility becomes low. In this case, another approach works well. 

\textbf{Pooling contract.} In pooling contracts, the agent has no incentive to learn the optimal error. The pooling contract asks the agent to achieve one accuracy level $\bar{a}$ regardless of the optimal error. The payment for this accuracy is set to ensure the agent can get nonnegative utility regardless of the optimal error. It is again straightforward to understand the agent's response to this contract. Suppose $\bar{a}$ can be achieved by collecting $\bar{n}_1 < \bar{n}_2$ samples under optimal errors $\theta_1 < \theta_2$ respectively. To execute this contract, the agent starts collecting $\bar{n}_1$ and sees if it achieves $\bar{a}$ accuracy. If it does not, he collects $\bar{n}_2-\bar{n}_1$ more samples since this action is guaranteed to yield nonnegative utility. Furthermore, collecting fewer or no additional samples results in less than $\bar{a}$ expected accuracy and hence zero payment even though the agent exerted effort. 

A pooling contract does not let agents differentiate actions for different optimal errors and would be sub-optimal for this reason. However, when the difference in both problems is not significant i.e., $\Delta \theta$ is low, the benefit to the principal for distinguishing the agents is low. In summary, the separating contract has good utility when $\Delta \theta$ is large and the pooling contract has good utility when $\Delta \theta$ is small. By deploying the contract of the two with the higher utility, we can hope to have good utility for all values of $\Delta \theta \in [0,0.5]$. 

\textbf{Numerical results.} We  compute the utility  difference between the state-aware contract and the state-learning contract, varying problem parameters $\Delta \theta = \theta_2 - \theta_1$ and $k$. We highlight a few observations (see Fig~\ref{fig:typeLearningContract}), that reflect the intuition we used to design the approach for state-learning contracts. 

\begin{figure}[!ht]
    \subfloat[{\small Utilities of different contracts vs $\Delta \theta$}\label{subfig:typeAwarePoolSep}]{%
      \includegraphics[width=0.3\textwidth]{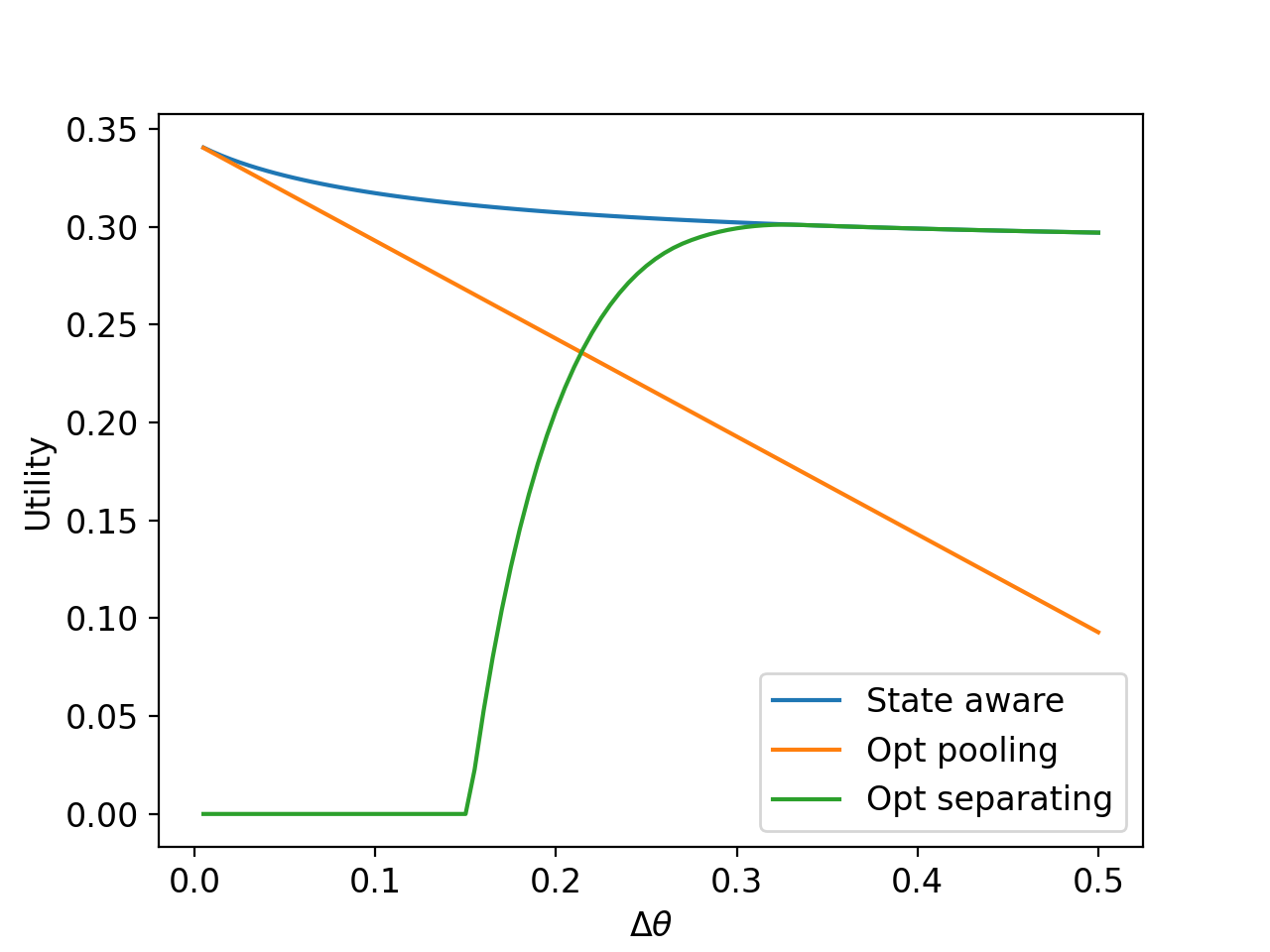}}
    \hfill
    \subfloat[{\small Utilities of state-aware and state-learning contracts for various $k$'s vs $\Delta \theta$  }\label{subfig:typeawareVsTypeLearnVaryK}]{%
      \includegraphics[width=0.3\textwidth]{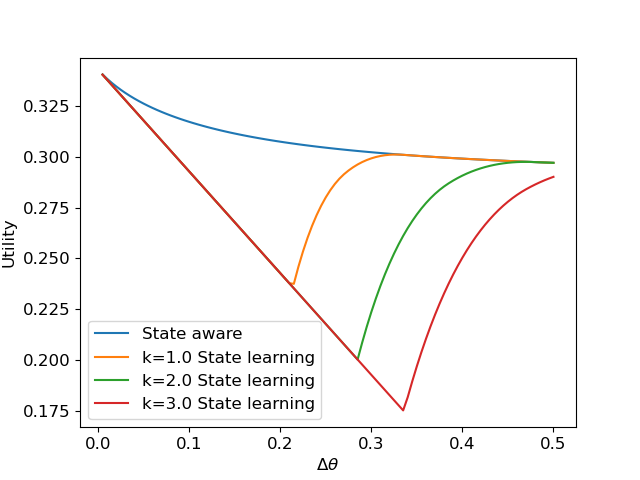}
    }
    \hfill 
     \centering
    \subfloat[{\small Worst-case sub-optimality of state-learning contracts relative to state-aware contracts}\label{subfig:typeLearnApprox}]{%
      \includegraphics[width=0.3\textwidth]{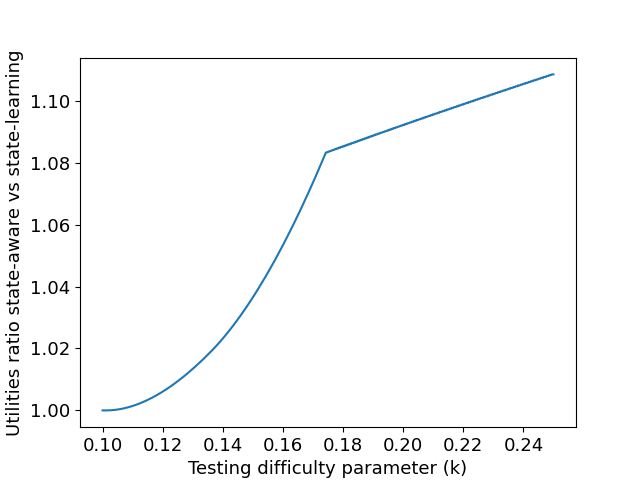}
    }
    \caption{Figure~\ref{subfig:typeAwarePoolSep} plots the utilities of the state-aware, separating, and the pooling contract against $(\Delta \theta)$. Figure~\ref{subfig:typeawareVsTypeLearnVaryK} again plots the utilities of contracts on the $y$-axis and $\Delta \theta$ on the $x$-axis. It plots the state-aware utility and the utilities of state-learning contracts of different levels $k$ of agent's testing efficiency. Figure~\ref{subfig:typeLearnApprox} plots the worst-case sub-optimality of state-learning contracts against $k$. The sub-optimality is the ratio of the state-learning contract's utility and the state-aware utility. The worst-case sub-optimality is the largest sub-optimality over all $\Delta \theta \in [0,0.5]$.} 
    \label{fig:typeLearningContract}
  \end{figure}

  \begin{itemize}[itemsep=-4pt, topsep=-4pt, leftmargin = *]
    \item Figure~\ref{subfig:typeAwarePoolSep} shows that the state-learning contract is pooling when $\Delta \theta$ is less than some threshold and is separating otherwise. For small and large values of $\Delta \theta$, the state-learning contract has utility close to the state-aware utility. 
    \item Figure~\ref{subfig:typeawareVsTypeLearnVaryK} shows that when it is more difficult to distinguish between $\theta_1, \theta_2$, the pooling contract is better than the separating contract for more values of $\Delta \theta$.
    \item Figure~\ref{subfig:typeLearnApprox} shows that the worst-case sub-optimality over all $\Delta \theta$ values of the state-learning contract compared to the state-aware utility increases as $k$ increases. When the agent can test more efficiently, the state-learning contract has greater utility for the principal. 
\end{itemize}

\section{Multi-round Delegation}\label{sec:multiround}
So far, we analyzed delegated learning that occurs through a single round of interaction between the principal and the agent. However, delegation often occurs over multiple rounds to allow the agent to learn more about the principal's requirements. Here we model such a scenario and analyze what happens when the principal uses a linear contract in each round. We introduce a notion of regret and show that repeated linear contracting over $T$ rounds results in $\Theta(T^{3/4})$ regret for the principal which is worse than the $O(T^{2/3})$ regret achievable without delegation i.e., the \emph{first-best regret}. 
We provide proof sketches of our results in the main body and provide the full proofs in the appendices.

\paragraph{The model.} 
To model uncertainty about the principal's requirements, we assume that the target distribution $D^*$ belongs to a class $\mathcal{D} = \{D_1, \ldots, D_k\}$. The agent knows the class $\mathcal{D}$ but does not know $D^*$ apriori. The principal contracts and deploys classifiers for $T$ rounds. 

\paragraph{Contracting Protocol.} For each round $i = 1, \ldots, T$:
    \begin{enumerate}[itemsep=0pt, topsep=0pt, leftmargin = *]
        \item The principal announces payment rule  $\rho_i$ which is a randomized mapping from a classifier to a positive, real-valued payment to the agent. 
        \item The agent chooses a target number of i.i.d.\ samples to collect from each distribution. Denote this number by $\mathbf{n_i} = (n_i^{(1)}, \ldots, n_i^{(k)})$ where $n_i^{(j)}$ is the number of samples the agent draws from distribution $D_j$ in round $i$. This choice is not observed by the principal.
        \item The agent provides classifier $h_i$ to the principal.
        \item The principal deploys a classifier $\bar{h}_i$. 
        \item The principal pays $\rho_i(h_i)$ to the agent according to the announced payment rule $\rho_i$.
    \end{enumerate}
The principal's and agent's action in each round can be chosen adaptively depending on the actions and outcomes of previous rounds. 
    
\paragraph{Utilities.} Over these $T$ rounds, the agent's utility is the sum of payments minus sample collection costs: $\sum_{i=1}^T \left (\rho_i(h_i) - \alpha \sum_{j=1}^k n_i^{(k)} \right )$. The principal's utility is the sum of accuracies minus payments: $\sum_{i=1}^T \left (1 - L_{D^*}(\bar{h}_i) - \beta \rho_i(h_i) \right)$.

\paragraph{Test-accuracy-based payments.} Our results deal with contracts based on test accuracy of the deployed classifier. In round $i \in [T]$ where the principal deploys the classifier $\bar{h}_i$, the payment is a function of the test accuracy $1 - L_{D^*}(\bar{h}_i) + \eta_i$, where $\eta_i$ is a mean-zero, random variable resulting in the principal's randomness of testing. A linear contract $c_i$ in this round offers payment $c_i$ times this test accuracy in this round.

\paragraph{Feedback-providing contracts.} In these repeated interactions, the payments serve as feedback to the agent to learn the principal's requirements as long as the principal's deployed classifiers depend on the classifiers provided by the agent. We focus on \emph{feedback-providing} contracts which satisfy the following property. A contract is feedback-providing if for every round $t$ with $c_t > 0$, the principal deploys the agent-provided classifier, and the payment for the round is therefore $c_t$ times the test accuracy of the agent-provided classifier.

We prove positive results on the utility the principal can achieve when contracting with a rational agent.

 \paragraph{$\mathcal{H}$-regret.} We introduce a notion of regret for the principal and the agent in this online setting. The regret notion is defined relative to a class of classifiers $\mathcal{H}$. It compares utility to the utility obtained by deploying the best classifier in $\mathcal{H}$ in every round, without any sample collection or payments. 

\begin{definition}[$\mathcal{H}$-regret]
    Let $\mathcal{H}$ be any class of classifiers. Let $((\mathbf{n}_t, h_t, \rho_t(h_t))_{t=1}^T$ be the sequence of actions by the principal and agent. 
    The principal's $\mathcal{H}$-regret ($R^P_T(\mathcal{H})$) is the difference in the utility of the sequence and the utility of deploying the most accurate classifier in $\mathcal{H}$ without payments: 
    \begin{align*}
        R^P_T(\mathcal{H}) &= T \max_{h \in \mathcal{H}} (1 - L_{D^*}(h)) - \sum_{t=1}^T \fParantheses{ 1 - L_{D^*}(h_t) - \EE \fBrackets{\rho_t(h_t)}}
    \end{align*}

    The agent's $\mathcal{H}$-regret ($R^A_T(\mathcal{H})$) is the difference in the utility of the sequence and the utility of deploying the highest expected payment yielding classifier in $\mathcal{H}$ and collecting no samples: 
    \begin{align*}
        R^A_T(\mathcal{H}) &= \sum_{t=1}^T \max_{h \in \mathcal{H}} \EE \fBrackets{\rho_t(h)} - \sum_{t=1}^T \fParantheses{ \rho_t(h_t) -\alpha \sum_{j=1}^k n_t^{(j)}}.
    \end{align*}
\end{definition}

We consider a model of rationality for the agent in repeated linear contracting. This is an assumption on the agent's $\Hcal$-Regret rate. 

\paragraph{Agent's rationality.} Let $(c_1)_{t=t}^T$ be the sequence of linear contracts employed by the principal. A rational agent achieves $\Hcal$-regret sub-linear in $\sum_{t=1}^T c_t$. 

We discuss the implications of contracting with a rational agent first and then justify the rationality assumption by providing an algorithm for the agent to achieve the rationality criteria.  

In the following proposition, we show how contracting with such a rational agent provides the principal a sub-linear $\Hcal$-regret. The proof of the proposition constructs a contract that the principal can use to achieve sub-linear $\mathcal{H}$-regret. This contract makes use of an upper bound on the agent's regret rate.  

 \begin{proposition} \label{prop:multiRoundPrincipalRegret}
    Suppose the agent achieves $O\fParantheses{\fParantheses{\sum_{t=1}^T c_t}^x}$ agent's $\Hcal$-regret with $x < 1$, for any sequence of contracts $(c_t)_{t=1}^T$. Then, the principal can achieve $O\fParantheses{T^{\frac 1 {2-x}}}$ $\Hcal$-regret.
\end{proposition}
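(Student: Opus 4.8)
The plan is to have the principal use repeated linear contracts with a common slope $c$ across all $T$ rounds, chosen to balance two competing effects: a larger $c$ speeds up the agent's learning (shrinking the agent's contribution to the principal's regret via the agent's sublinear $\Hcal$-regret guarantee), but a larger $c$ also increases the payments the principal makes, which directly costs the principal utility. So the first step is to write the principal's $\Hcal$-regret as a sum of two terms: (i) a ``learning'' term coming from the rounds where the agent-provided classifier $h_t$ is worse than $\max_{h\in\Hcal}(1-L_{D^*}(h))$, and (ii) a ``payment'' term $\sum_t \EE[\rho_t(h_t)] = c\sum_t \EE[\text{test accuracy in round }t] \le c\cdot T$ (using that accuracies lie in $[0,1]$ and $\eta_t$ is mean-zero).

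Second, I would bound the learning term using the agent's rationality assumption. With the constant slope $c$, we have $\sum_{t=1}^T c_t = cT$, so the agent's $\Hcal$-regret is $O((cT)^x)$. The key is to translate the agent's $\Hcal$-regret (a statement about payments received versus the best payment in $\Hcal$) into a bound on $\sum_t \big(\max_{h\in\Hcal}(1-L_{D^*}(h)) - (1-L_{D^*}(h_t))\big)$, i.e., the principal's accuracy shortfall. Because the contract is linear and feedback-providing, the expected payment to the agent for submitting classifier $h$ in round $t$ is exactly $c\,(1-L_{D^*}(h))$ — the same (up to the factor $c$) as the principal's accuracy from deploying $h$. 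Hence $\max_{h\in\Hcal}\EE[\rho_t(h)] = c\max_{h\in\Hcal}(1-L_{D^*}(h))$, and the agent's $\Hcal$-regret is $c$ times the principal's accuracy shortfall plus the (nonnegative) sample-collection cost. Therefore the principal's accuracy shortfall is at most $\tfrac1c \cdot R^A_T(\Hcal) = O\big((cT)^x/c\big) = O(c^{x-1}T^x)$.

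Third, combine: $R^P_T(\Hcal) = O(c^{x-1}T^x) + O(cT)$. Optimizing over $c$: set $c^{x-1}T^x \asymp cT$, i.e. $c^{2-x}\asymp T^{x-1}$, so $c \asymp T^{(x-1)/(2-x)}$, and then each term is of order $T^{x + (x-1)(x-1)/(2-x)}$. A short computation gives the exponent $x + \tfrac{(x-1)^2}{2-x} = \tfrac{x(2-x)+(x-1)^2}{2-x} = \tfrac{1}{2-x}$, which is exactly the claimed $O(T^{1/(2-x)})$. (One should check $c = T^{(x-1)/(2-x)} \to 0$ as $T\to\infty$ since $x<1$, which is fine — a linear contract with small positive slope is legal.)

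The main obstacle — and the step deserving the most care — is the translation in the second/third steps: making precise that under a feedback-providing linear contract the agent's per-round expected payment equals $c$ times the true accuracy of the submitted classifier, so that the agent's low-$\Hcal$-regret directly controls the principal's accuracy shortfall. This requires invoking the feedback-providing property (so the deployed $\bar h_t$ is the agent's $h_t$ whenever $c_t>0$), handling the mean-zero test noise $\eta_t$ via linearity of expectation, and ensuring the nonnegativity of the agent's sample costs is used in the correct direction so that dropping it only weakens (not strengthens) the needed inequality. Everything else is the routine balancing of two power-of-$T$ terms.
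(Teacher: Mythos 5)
Your argument is correct, and it reaches the $O(T^{1/(2-x)})$ bound by a genuinely different route than the paper. The paper uses an explore-then-commit contract: it sets $c_t=1$ for the first $N=T^{1/(2-x)}$ rounds (incurring regret $O(N)$ trivially), then shuts contracting down ($c_t=0$) and deploys a classifier drawn uniformly from $h_1,\ldots,h_N$; the agent's $O(N^x)$ regret over the paid phase bounds the \emph{average} excess loss of those $N$ classifiers by $O(N^{x-1})$, giving $O(T/N^{1-x})$ regret in the unpaid phase. You instead run a single constant-slope contract $c\asymp T^{(x-1)/(2-x)}$ for all $T$ rounds and decompose the principal's regret as (accuracy shortfall) $+$ (payments), using the identity $\max_h\EE[\rho_t(h)]=c\max_h(1-L_{D^*}(h))$ and nonnegativity of sampling costs to convert the agent's $O((cT)^x)$ regret into an $O(c^{x-1}T^x)$ shortfall bound; the balancing computation is right. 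The trade-offs: your contract never requires the principal to deploy anything other than the agent's current classifier (no storage/redeployment of past classifiers), and it keeps feedback flowing in every round; on the other hand, it leans more heavily on the hypothesis that the $O((\sum_t c_t)^x)$ guarantee holds with a constant uniform over the slope, including slopes $c\to 0$ as $T\to\infty$, whereas the paper only ever invokes the guarantee at $c_t\in\{0,1\}$. As stated, the hypothesis does grant you that uniformity, so this is a matter of robustness rather than a gap.
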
 

\begin{proof}
The contract that allows this regret for the principal is described here. The principal contracts with $c_t = 1$ for the first $N = T^{1/(2-x)}$ rounds. In these rounds $t \in [1, N]$, the principal deploys the agent-provided classifier $h_t$. The principal shuts down contracting for the remaining rounds. That is, $c_t = 0$ for $t \in (N, T]$. In rounds $t \in (N,T]$, the principal deploys a classifier $\bar{h}$ that is uniformly, at random picked from the classifiers $h_1, \ldots, h_N$ deployed in the first $N$ rounds. 

In the first $N$ rounds, the principal's $\mathcal{H}$-regret is bounded by $O(N)$ trivially. In the remaining $T-N$ rounds, the principal's $\mathcal{H}$-regret is bounded above by just the suboptimality of the deployed classifiers since no payments are offered. The regret in the last $T-N$ rounds is at most
\begin{align*}
    (T - n) \left ( \E[L_{D^*}(\bar{h})] - \theta^* \right ) &= (T - n) \frac 1 N \sum_{i \in [N]}\left ( \E[L_{D^*}(\bar{h})] - \theta^* \right ) \\
    &\leq (T - n) \frac 1 N \cdot \text{Agent's $\mathcal{H}$-regret} \\
    &\in O \left (\frac{T}{N^{1-x}}\right ),
\end{align*}
where the last step follows from the rationality assumption that the agent achieves a $\mathcal{H}$-regret of $O(N^x)$.
The cumulative principal's $\mathcal{H}$-regret overall all $T$ rounds is therefore $O(N + T / N^{1-x})$. Since $N = T^{1/(2-x)}$, principal's $\mathcal{H}$-regret is $O(T^{1/(2-x)})$.
 
\end{proof}

The above proposition shows how the principal can achieve sublinear $\mathcal{H}$-regret under the agent's rationality assumption. Now we show that the agent's rationality can be achieved. In our proof, we provide an algorithm for the agent to achieve $\mathcal{H}$-regret sublinear in $\left ( \sum_{t \in [T]} c_t \right )^{2/3}$.
\begin{proposition}\label{prop:agentAlgorithmRepeated}
    For any sequence of feedback-providing contracts $(c_t)_{t=1}^T$, the agent can achieve $O \left (\left ( \sum_{t \in [T]} c_t \right )^{2/3} \right )$ $\mathcal{H}$-regret.
\end{proposition}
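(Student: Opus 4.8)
The plan is to first reduce the (expected) agent $\mathcal{H}$-regret to a transparent bias–cost tradeoff, and then to exhibit an explicit explore-then-commit strategy. Write $\theta^* = \min_{h\in\mathcal{H}} L_{D^*}(h)$. Since each contract is linear, $\mathbb{E}[\rho_t(h)] = c_t\bigl(1 - L_{D^*}(h)\bigr)$ because $\mathbb{E}[\eta_t]=0$, so the per-round best expert in $\mathcal{H}$ earns $c_t(1-\theta^*)$; by linearity of expectation the expected value of $R^A_T(\mathcal{H})$ against $(c_t)_{t=1}^T$ is
\[
\sum_{t=1}^T c_t\bigl(L_{D^*}(h_t)-\theta^*\bigr)\;+\;\alpha\sum_{t=1}^T\sum_{j=1}^k n_t^{(j)}.
\]
Thus the agent must balance paying $c_t$ per unit of excess error of the classifier it submits against the $\alpha$ per-sample cost of collecting data, with the one extra wrinkle that $D^*$ is unknown a priori. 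Note also that rounds with $c_t=0$ contribute nothing to either term, so the agent collects nothing in them and we may treat $\sum_t c_t$ as a sum over the $c_t>0$ rounds.

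Next I would describe the algorithm. Let $C_t=\sum_{s\le t}c_s$. In \textbf{Phase I (identification)}, the agent collects a constant number $m$ of samples from each $D_j$, trains a candidate classifier on each, and over a constant number of rounds submits these candidates (repeating each a constant number of times so that the mean-zero noise $\eta_t$ averages out); the observed payments are unbiased estimates of $c_t(1-L_{D^*}(\hat h_j))$, so the agent declares $D^*$ to be the distribution of the best-performing candidate. In \textbf{Phase II (commit)}, having identified $D^*$, in each subsequent round the agent tops up its sample count from $D^*$ to $N_t = \lceil \kappa\, C_t^{1/(p+1)}\rceil$ for a constant $\kappa$ to be chosen, retrains, and submits the classifier, whose excess error is at most $d/N_t^{p}$. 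Since $C_t$ is nondecreasing the counts only grow, so the total number of samples ever collected is $N_T + O(km) = \Theta\!\bigl(C_T^{1/(p+1)}\bigr)$.

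The regret bound then splits into the two phases. Phase II contributes at most $\sum_t c_t\, d/N_t^{p} + \alpha N_T \le \frac{d}{\kappa^{p}}\sum_t c_t\, C_t^{-p/(p+1)} + \alpha\kappa\, C_T^{1/(p+1)} + \alpha$; bounding $\sum_t c_t C_t^{-p/(p+1)} = \sum_t (C_t-C_{t-1})C_t^{-p/(p+1)} \le \int_0^{C_T} x^{-p/(p+1)}\,dx = (p+1)\,C_T^{1/(p+1)}$ (using that $x^{-p/(p+1)}$ is decreasing) and then optimizing $\kappa$ shows Phase II costs $O\!\bigl(C_T^{1/(p+1)}\bigr)$; with the PAC/VC rate $p=\tfrac12$ this is exactly $O\!\bigl((\sum_t c_t)^{2/3}\bigr)$. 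Phase I contributes at most (number of identification rounds)$\times\max_t c_t$ from the poor classifiers it submits there, plus $O(\alpha k m)$ for the samples of the non-selected distributions; since the round count and $m$ are fixed constants (depending only on $k$, the noise variance, and the separation between distributions), this is a lower-order term whenever the slopes $c_t$ are bounded — and in general it can be kept lower order by performing Phase I only in rounds whose $c_t$ falls below an adaptive threshold. Summing the two phases gives the claimed $O\!\bigl((\sum_t c_t)^{2/3}\bigr)$ bound, which is exactly the rate $x=2/3$ needed to invoke Proposition~\ref{prop:multiRoundPrincipalRegret}.

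The main obstacle I expect is making Phase I rigorous: one needs (i) a separation/identifiability condition guaranteeing that a classifier trained on $D_j\neq D^*$ is detectably worse on $D^*$ than one trained on $D^*$ (otherwise the payment feedback carries no usable signal about $D^*$), (ii) enough repetitions to push the possibly-large-but-finite-variance test noise below that separation margin, and (iii) enough care that the identification rounds do not by themselves soak up $\Omega\!\bigl(C_T^{1/(p+1)}\bigr)$ worth of contract slope. Once an identifiability margin is fixed each of these costs only a $T$- and $C$-independent amount, so it is absorbed. A secondary, easier point is that the agent does not know $\sum_t c_t$ ahead of time, but the adaptive rule $N_t=\Theta\!\bigl(C_t^{1/(p+1)}\bigr)$ sidesteps this at the price of only constant factors from integer rounding and from the monotone growth of the sample counts.
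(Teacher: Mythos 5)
Your reduction of the expected agent $\mathcal{H}$-regret to $\sum_t c_t\bigl(L_{D^*}(h_t)-\theta^*\bigr)+\alpha\sum_{t,j}n_t^{(j)}$ is correct, and your Phase~II accounting --- cumulative sample count $\Theta(C_t^{1/(p+1)})$, the integral bound $\sum_t c_t C_t^{-p/(p+1)}\le (p+1)C_T^{1/(p+1)}$, and the specialization $p=1/2$ giving the $2/3$ exponent --- matches the cost--bias tradeoff behind the paper's sampling schedule (the paper has the agent hold $\lfloor s_t^{2/3}\rfloor$ cumulative samples at round $t$). The problem is Phase~I. You propose identifying $D^*$ with a \emph{constant} number of samples per distribution and a constant number of feedback rounds, and you correctly flag that this needs a separation/identifiability condition --- but no such condition exists in the model, and without it this step is not a technicality but the point where the argument breaks. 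If two distributions $D_j\neq D^*$ yield candidate classifiers whose accuracies on $D^*$ differ by some instance-dependent gap $\gamma>0$, a constant-length identification phase misidentifies $D^*$ with probability bounded away from $0$, and committing to the wrong distribution then incurs regret $\Omega(\gamma\sum_t c_t)$, i.e.\ linear in $C_T$ rather than $O(C_T^{2/3})$. The refinement ``if I can't distinguish them, their classifiers must transfer'' does not rescue a constant budget either: failing to distinguish after $m$ feedback rounds only certifies a gap of order $1/\sqrt{m}$, so the commit cost is $\Omega(C_T/\sqrt{m})$, forcing $m=\Omega(C_T^{2/3})$ --- not a constant, and not knowable in advance since $C_T$ is revealed online.

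This is exactly why the paper's algorithm never commits: it runs \emph{phased} exploration inside the sampling rounds (phase $j$ of length $2^{j-1}$, each of the $k$ candidates re-trained and re-submitted $2^{j-1}/k$ times), selects the empirically best candidate of each phase from the payment feedback, and guarantees only that this candidate is within $O(\sqrt{k/2^{j-1}})$ of the best available --- no correct identification, and hence no separation parameter, is ever needed; the per-phase estimation error is then charged against the contract mass $\le 2^{j+1}$ that can elapse before the next phase begins, and the geometric sum over phases gives $O(s_T^{2/3})$. To repair your proof you would need to replace the explore-then-commit structure with such an anytime, doubling-style exploration whose accuracy degrades gracefully with the (unknown) gaps; as written, the proposal does not establish the proposition. (One further shared caveat: both your bound on the identification rounds via $\max_t c_t$ and the paper's ``trivial $O(s_T^{2/3})$'' bound on exploration rounds implicitly assume per-round contract slopes are $O(1)$, which is natural here since accuracies lie in $[0,1]$ and the principal's construction uses $c_t\in\{0,1\}$, but is worth stating.)
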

\begin{proof}[Proof sketch]
    We will describe the agent's algorithm here and defer the full analysis of this algorithm to the appendices. For any round $t$, let us denote $\sum_{t' \leq t} c_t$ by $s_t$. Since the agent incurs cost linear in the number of samples collected, the algorithm must limit the number of samples to $O \left (s_T^{2/3} \right )$. To do this, at round $t$, the agents collects $\max \left (0, \left \lfloor s_{t}^{2/3} \right \rfloor - \left \lfloor s_{t-1}^{2/3} \right \rfloor \right )$  samples. As a result, for any $t$, the number of samples collected up to round $t$ is $\lfloor s_t^{2/3} \rfloor$.To select the distribution to sample from, the agent picks one out of $\{D_1, \ldots, D_k\}$ uniformly at random. This fully describes how the agent's algorithm collects samples by describing the number of samples and distributions to sample from.

    The remaining component of the agent's algorithm is classifier selection. The classifier selection includes exploration and exploitation where the exploration is to determine which of $D_1, \ldots, D_k$ provides samples to train an accurate classifier. Since we know $D^*$ is one of these $k$ distributions, we are guaranteed that there exists a distribution providing relevant samples. 
    
    We assign rounds of sample collection to be rounds of exploration. These rounds are suitable for exploration for two reasons. The first is because the principal deploys the agent-provided classifiers in these rounds. By the algorithm's design, the agent only collects samples in rounds with $c_t > 0$, and in feedback-providing contracts, agent-provided classifiers are deployed in these rounds. Secondly, this choice also leads to the right number of exploration rounds for the optimal exploration-exploitation tradeoff. Let $\{i_1, \ldots, i_r\}$ denote the indices of rounds of exploration. This set's size is at most $\left ( \sum_{t \in [T]} c_t \right )^{2/3}$. A phased exploration is done in these rounds. 

    The phased exploration divides the indices $\{i_1, \ldots, i_r\}$ into phases where phase $j$ is of length $2^{j-1}$. So at least $2^{j-1}$ samples are collected in phase $j$. The number of phases is therefore $\log \left (\left ( \sum_{t \in [T]} c_t \right )^{2/3} \right )$. 
    
    Each phase $j$ uniformly explores classifiers learned based on samples from each distribution, collected up to phase $j-1$. Let $Q^{(j)}_i$ be the set of samples collected from $D_i$ in phases $1, \ldots, j-1$. And let $h^{(j)}_i = \mathrm{ERM}_{\mathcal{H}}\left ( Q^{(j)}_i \right)$. In each round of phase $j$, the agent picks one of $\{h^{(j)}_1, \ldots, h^{(j)}_k\}$ uniformly at random. Let $h^*_j$ be the classifier of phase $j$ with the highest sum of test accuracies. The agent can compute the test accuracy by dividing payment by the contract coefficient (which is non-zero in exploration phases). 

    The agent treats all rounds other than $\{i_1, \ldots, i_r\}$ as exploitation rounds. For an exploitation round $t$, suppose the last collected sample was in phase $j$. Then in round $t$, the agent selects the best classifier in phase $j-1$: $h^*_{j-1}$. Due to the completion of the $(j-1)^{\text{th}}$ exploration phase, $h^*_{j-1}$ is guaranteed to have some optimality guarantees. 

    The regret analysis for this algorithm is presented in Appendix~\ref{proof:agentAlgorithmRepeated}.

\end{proof}
Propositions~\ref{prop:multiRoundPrincipalRegret} and~\ref{prop:agentAlgorithmRepeated} together imply that the principal can achieve $O(T^{3/4})$ $\mathcal{H}$-Regret through linear contracting with a rational agent. We next show that the principal cannot achieve better $\mathcal{H}$-regret rates when the agent's $\mathcal{H}$-regret rate is $O \left (\left (\sum_{t \in [T]} c_t \right )^{2/3} \right )$.

\begin{proposition}
    For any sequence of linear contracts $(c_t)_{t\in [T]}$, if the principal contracts with an agent with $O \left (\left (\sum_{t \in [T]} c_t \right )^{2/3} \right )$ $\mathcal{H}$-regret on all problem instances, the principal's $\mathcal{H}$-regret is $\Omega(T^{3/4})$ for some problem instance.
\end{proposition}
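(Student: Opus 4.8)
**The plan is to establish the $\Omega(T^{3/4})$ lower bound via a two-point (or few-point) construction of task-distribution classes that forces a tradeoff between two sources of regret: the cost of payments needed to incentivize the agent to collect enough samples, and the suboptimality incurred because the agent, limited by the rationality bound, cannot identify $D^*$ well when the cumulative contract budget $S = \sum_t c_t$ is small.** The key observation is that the agent's guaranteed $\mathcal{H}$-regret is only $O(S^{2/3})$, and — crucially for a lower bound — this is also essentially the \emph{best} the agent will do: we should exhibit an agent strategy that is rational (achieves $O(S^{2/3})$ $\mathcal{H}$-regret) yet is as unhelpful as possible to the principal. Then, whatever sequence of linear contracts the principal picks, with total budget $S$, the principal pays on the order of $S$ (up to the test-accuracy scaling, which is $\Theta(1)$ since accuracies lie in a bounded range), contributing $\Omega(S)$ to principal regret in the rounds where $c_t > 0$; meanwhile, in \emph{all} $T$ rounds the deployed classifier inherits the agent's learning error, and the average per-round suboptimality is $\Omega(S^{2/3}/S) = \Omega(S^{-1/3})$ — wait, more carefully, the agent's cumulative $\mathcal{H}$-regret being $\Theta(S^{2/3})$ over the rounds with positive contract, combined with a construction where distinguishing the $D_j$'s requires $\Theta(1/\Delta^2)$ samples and the agent can afford only $\Theta(S^{2/3})$ samples total, forces a per-round classification error of order $\Delta$ with $\Delta \asymp S^{-1/3}$, hence $T$ rounds of regret $\Omega(T \cdot S^{-1/3})$ from suboptimal deployment plus $\Omega(S)$ from payments.

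\textbf{The steps, in order.} First, I would fix a hard instance: a class $\mathcal{D} = \{D_1, D_2\}$ (two distributions suffice) parametrized so that $D_1$ and $D_2$ induce the same marginal on a large portion of the domain but differ on a region of probability mass $\Delta$, calibrated later; the optimal-in-$\mathcal{H}$ classifier differs between the two, and using the wrong one costs $\Theta(\Delta)$ accuracy per round. Second, I would argue that no matter how the principal allocates contract coefficients, letting $S = \sum_{t=1}^T c_t$, there is an agent response that (i) is rational — it collects $\Theta(\min(S^{2/3}, \text{budget that keeps sample cost } \alpha n \text{ affordable}))$ samples and runs a near-minimax learning procedure, certifying $O(S^{2/3})$ agent $\mathcal{H}$-regret — and (ii) forces the principal's regret to be large: the payment term alone costs the principal $\Omega(S)$ (since each $c_t$ multiplies a $\Theta(1)$ expected test accuracy, and $\beta > 0$), while the accuracy shortfall of the deployed classifiers is $\Omega(T \Delta)$ where $\Delta$ is the gap at which the agent's $\Theta(S^{2/3})$ samples are just insufficient to distinguish $D_1$ from $D_2$ with constant confidence, i.e. $\Delta \asymp S^{-1/3}$. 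Third, I would combine: principal's $\mathcal{H}$-regret $\gtrsim S + T \cdot S^{-1/3}$, minimize the right-hand side over $S \in [0, \mathrm{poly}(T)]$ — the minimum is at $S \asymp T^{3/4}$, giving value $\asymp T^{3/4}$ — and conclude $R^P_T(\mathcal{H}) = \Omega(T^{3/4})$ on this instance. Finally, I would invoke a Yao-type / averaging argument over $D^* \in \{D_1, D_2\}$ (choosing $D^*$ uniformly) to make the "wrong classifier" cost unavoidable for \emph{any} principal strategy, since the agent's feedback through test-accuracy payments is the only channel and it is information-limited by the sample budget.

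\textbf{The main obstacle} I expect is making step two fully rigorous: constructing an agent strategy that is simultaneously (a) certifiably rational in the precise sense of $O((\sum_t c_t)^{2/3})$ $\mathcal{H}$-regret on \emph{every} instance — not just the hard one — and (b) provably forces large principal regret, while respecting that the agent chooses samples \emph{before} knowing $D^*$ and learns only through its own collected data (the principal's test randomness $\eta_i$ does not help the agent). The delicate point is coupling the agent's sample budget to $S$ in a way the principal cannot circumvent: if the principal front-loads a huge $c_t$ in one round, the agent could collect many samples that round, so I must show the agent's \emph{total} sample count over the horizon is what matters and is $\Theta(S^{2/3})$, and that spreading vs.\ concentrating the budget does not let the principal get both cheap payments and an accurate classifier. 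A clean way to handle this is a potential/exchange argument showing any contract schedule with budget $S$ is dominated (from the principal's perspective, against the adversarial rational agent) by one where the agent's effective information is capped at the $S^{2/3}$ level, reducing to the one-parameter optimization above. The remaining calculations — the minimax distinguishing lower bound for the two-point family, and the arithmetic of minimizing $S + T S^{-1/3}$ — are routine.
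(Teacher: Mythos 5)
Your overall skeleton matches the paper's exactly: the agent's rationality bound caps its total sample count at $O(S^{2/3})$ (where $S=\sum_t c_t$), the sublinearity of the agent's regret forces total payments of $\Omega(S)$, a minimax learning lower bound with $m=S^{2/3}$ samples forces per-round excess error $\Omega(S^{-1/3})$, and balancing $S + T S^{-1/3}$ gives $S\asymp T^{3/4}$ and the claimed bound. That part is fine and is the same decomposition the paper uses.

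However, there is a genuine gap in how you handle the agent's information, and it is precisely where the paper's main technical work lies. You assert that the agent's feedback through test-accuracy payments ``is information-limited by the sample budget,'' but it is not: in every round with $c_t>0$ the agent receives, for free, a (noisy) estimate of $L_{D^*}(h_t)$ for a classifier $h_t$ of its choosing --- i.e., up to $T$ accuracy queries that cost nothing against the sample budget. Against your two-point family $\{D_1,D_2\}$, which the agent knows in advance, a single classifier $h$ with $L_{D_1}(h)\neq L_{D_2}(h)$ can be queried repeatedly; $O(1/\Delta^2)\asymp T^{1/2}$ such queries identify $D^*$, after which the agent supplies the optimal classifier and the accuracy-shortfall term collapses from $\Omega(T\Delta)$ to roughly $T^{1/2}\Delta \ll T^{3/4}$. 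So the two-point construction does not yield the lower bound against an arbitrary rational agent (which is the quantification the proposition and the paper's proof use). The paper addresses exactly this with Lemma~\ref{prop:queriesuseless}: a construction over a domain of $M>2^q m$ points so that for \emph{any} $q$ accuracy queries there remain two distributions agreeing on all query answers yet requiring $\Omega(1/\sqrt{m})$ excess error to disambiguate from samples alone. Your proposal needs either this richer construction, or an explicit restriction to a specific exhibited agent that provably ignores the feedback channel (a strictly weaker statement than the one proved in the paper, and one you would still need to reconcile with your closing Yao-style ``unavoidable for any principal strategy'' claim, which implicitly relies on the feedback being useless).
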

\begin{proof}
 Let us denote $\sum_{t=1}^T c_t$ by $s_T$. An agent with $O \left (s_T^{x} \right )$ $\mathcal{H}$-regret collects $O \left (s_T^{x} \right )$ samples since the agent incurs cost linear in the number of samples collected. This upper bound on the number of samples provides a lower bound on the excess errors of the deployed classifiers and therefore a lower bound on the principal's regret. 

    Usual sample complexity lower bounds provide lower bounds on the error of learning using a number of i.i.d samples. However, in our setting, the agent has more than just the samples he collects to learn classifiers. Through the linear payments he receives, he also has access to estimates of the accuracy of classifiers he provides in each round. This is a form of (noisy) query access to the distribution.  

We provide a min-max lower bound on learning using both i.i.d samples and queries of the form answering the expected error on $D^*$ of a classifier in the following proposition. We show that in a min-max sense, the queries do not allow for more accurate classifiers compared to using just the i.i.d samples. 
 
\begin{lemma}[Lower bound on error from using samples and queries]\label{prop:queriesuseless}
Consider a learning algorithm that uses $m$ i.i.d samples and $q$ queries of accuracies of classifiers. Then there exists a distribution $D$ for which the expected error of the learned classifier is $\Omega(1/\sqrt{m})$ more than the optimal error of a one-dimensional halfspace on $D$. 
\end{lemma}
We prove this lemma in Appendix~\ref{proof:queriesuseless}. The main intuition for this lower bound is that without any structure on the distribution, it is hard to know what classifiers are useful to query. Therefore the queries are not useful. 

Due to the above proposition, there is a problem instance for which the classifiers the principal deploys have an average excess error of $\Omega(1/s_T^{1/3})$ resulting in regret $\Omega(T / s_T^{1/3})$. 

The principal also incurs a cost of order $\Omega(s_T)$ due to the contracting. Due to the sub-linearity of agent's regret in $s_T$, the payments have to be $\Omega(s_T)$ resulting in this term of the principal's regret.

The optimal choice of $s_T$ to balance these two terms is $s_T \in \Theta (T^{3/4})$ resulting in regret of order $\Omega(T^{3/4})$ .

% \vspace{-0.2cm}
\end{proof}

\textbf{Comparison with non-delegation benchmark.} We have shown that the principal's $\mathcal{H}$-regret through delegating with an $O \left (\left (\sum_{t \in [T]} c_t \right )^{2/3} \right )$ $\mathcal{H}$-regret agent is $\Omega(T^{3/4})$. Since there is an algorithm for the agent to achieve $O \left (\left (\sum_{t \in [T]} c_t \right )^{2/3} \right )$ $\mathcal{H}$-regret, a rational agent is likely to achieve this regret upper bound. 

If the principal instead performs this learning without delegation, the achievable regret is the same as the agent's achievable regret when the contract coefficient is $1$ in every round. By Proposition~\ref{prop:agentAlgorithmRepeated}, $O(T^{2/3})$ regret is achievable. Note that this means the $\mathcal{H}$-regret rate is strictly worse for the principal through delegation compared to without delegation.
  
\section{Conclusions}
Different parts of the learning pipeline are increasingly being delegated to autoML services and firms. Focusing on the delegation of data collection in such settings, we developed a principal-agent model capturing the practical challenges of hidden action and hidden state arising due to information asymmetry.  We hope our work inspires future research exploring and addressing other incentive-theoretic obstacles that arise in this domain and in the delegation of other parts of the learning pipeline.

We identified the practicality of linear contracts under many problem parameters. We also obtained insights that hold for many other delegation settings, including the decreased utility and information rent the principal faces due to not knowing the hidden state. We also addressed a more realistic form of hidden state information asymmetry where the agent gradually learns the hidden state while executing the contract. 

Many of our results rely on a specific structure for the dependence of error rates on number of samples used for training. These rates are exact for some learning tasks (see Remark~\ref{rem:linregression}) but are generally upper bounds. From the principal's perspective, the contracts we designed continue to have good accuracy guarantees even if the error rates are just upper bounds and not exact. The agent's perspective is more complicated. Since the agent can learn more about the true error curve during learning and not rely on the upper bound, analyzing how the error curve learning occurs is needed to allow us to design truly incentive-compatible contracts in this more general setting. However, learning the error curve shape is learning from a much broader class and is likely to be more challenging. How this challenge impacts the utility of contracts would be an interesting direction for future work.

\section*{Acknowledgements}
Funded in part by the European Union (ERC-2022-SYG-OCEAN-101071601). Views and opinions expressed
are however those of the authors only and do not necessarily reflect those of the European
Union or the European Research Council.

This work was supported in part by the National Science Foundation under grant CCF-2145898, by the Office of Naval Research under grant N00014-24-1-2159, a C3.AI Digital Transformation Institute grant, and Alfred P. Sloan fellowship, and a Schmidt Science AI2050 fellowship

\nocite{*}

\bibliography{bibliography}
\bibliographystyle{plainnat}

\newpage

\appendix

\section{Omitted proofs}\label{app:proofs}

\subsection{Proof of Proposition~\ref{thm:LinContractApprox}}
\begin{proof}
The linear contract $c^*$ that achieves this approximately optimal utility is the following:
\[c^* = \max \fParantheses{\frac 1 {\beta(p+1)^{\frac{p+1}{p}}}, \frac{\alpha d^{\frac 1 p}}{p} \cdot \fParantheses{\frac{p+1}{1-\theta}}^{\frac{p+1}{p}}}.\]

We first show that this contract satisfies the participation constraint for the agent. When the linear contract is $c$ times the accuracy, the agent the number of samples $n$ to maximize  the agent's utility $c\fParantheses{1 - \theta - \frac{d}{n^p}} - \alpha n$. The number of samples the agent chooses as a function of $c$ is $\fParantheses{\frac{cdp}{\alpha}}^{\frac{1}{p+1}}$. The contract $c$ satisfies the  participation constraint if the utility from choosing this number of samples is non-negative. This utility is:
\begin{align*}
    &=c \fParantheses{1 - \theta - d \fParantheses{\frac{\alpha}{cdp}}^{\frac{p}{p+1}}} - \alpha \fParantheses{\frac{cdp}{\alpha}}^\frac{1}{p+1} \\
    &= c(1-\theta) - c^{\frac{1}{p+1}} \fParantheses{\frac{\alpha d^{\frac 1 p}}{p}}^{\frac{p}{p+1}} - c^{\frac{1}{p+1}} \fParantheses{{\alpha d^{\frac 1 p}}^{\frac{p}{p+1}}} p^{\frac{1}{p+1}} \\
    &= c(1-\theta) - c^{\frac{1}{p+1}} \cdot \frac{p+1}{1-\theta} \cdot \fParantheses{\frac{\alpha d^{\frac{1}{p}}}{p}}^\frac{p}{p+1}.
\end{align*}
This utility is non-negative when $c \geq \frac{\alpha d^{\frac{1}{p}}}{p} \fParantheses{\frac{p+1}{1-\theta}}^\frac{p+1}{p}$. By the definition of $c^*$, it is greater than $\frac{\alpha d^{\frac{1}{p}}}{p}$ and so $c^*$ satisfies the participation constraint.

When the principal chooses a linear contract $c$, it achieves a utility
\begin{align*}
    U_{\text{lin}}(c) &= (1 - \beta c) \fParantheses{1 - \theta - \fParantheses{\frac{\alpha d^{\frac 1 p}}{pc}}^\frac{p}{p+1}}.
\end{align*}
We can provide an upper bound on the optimum utility using the optimum utility of the principal when there is no noise in the observed accuracy. In this case, the principal gets the agent to collect $\fParantheses{\frac{dp}{\alpha \beta }}^{\frac{1}{p+1}}$ and pays the agent $\alpha$ times this amount. So the optimum utility $U_{\text{opt}}$ satisfies 
\[U_{\text{opt}} \leq 1 - \theta - (p+1) \fParantheses{\frac{\alpha\beta d^{\frac 1 p}}{p}}^\frac{p}{p+1}.\]

To show that $c^*$ achieves the approximation guaranteed in the theorem, we consider two cases. 

\paragraph{Case 1.} The first case is when $c^* = \frac 1 {\beta(p+1)^{\frac{p+1}{p}}}$. In this case the utility of $c^*$ is 
\begin{align*}
    U_{\text{lin}}(c^*) &= \fParantheses{1 - \frac{1}{(p+1)^\frac{p+1}{p}}} \fParantheses{1 - \theta - (p+1) \fParantheses{\frac{\alpha\beta d^{\frac 1 p}}{p}}^\frac{p}{p+1}} \\
    &\geq \fParantheses{1 - \frac{1}{(p+1)^\frac{p+1}{p}}} U_{\textbf{opt}}
\end{align*}
So in the first case, we have shown the required bound on the approximation ratio of the contract $c^*$.

\paragraph{Case 2.}The other case is when $c^* = \frac{\alpha d^{\frac 1 p}}{p}  \fParantheses{\frac{p+1}{1-\theta}}^{\frac{p+1}{p}}$. 
In this case, 
\begin{align*}
    U_{\text{lin}}(c^*) &= \fParantheses{1 - \frac{\alpha\beta d^{\frac 1 p}}{p} \fParantheses{\frac{p+1}{1-\theta}}^\frac{p+1}{p}} \cdot {\frac{(1-\theta)p}{p+1}} \\
    \frac{U_{\text{lin}}(c^*)}{U_{\text{opt}}} &= \frac{\fParantheses{1 - \frac{\alpha\beta d^{\frac 1 p}}{p} \fParantheses{\frac{p+1}{1-\theta}}^\frac{p+1}{p}} \cdot {\frac{(1-\theta)p}{p+1}}}{(1 - \theta) \left ( 1 - \frac{p+1}{1-\theta} \fParantheses{\frac{\alpha\beta d^{\frac 1 p}}{p}}^\frac{p}{p+1} \right )} \\
    &= \frac{\frac{p}{p+1} \fParantheses{1-t^\frac{p+1}{p}}}{1-t} &{\left (\text{where } t = \frac{p+1}{1-\theta} \cdot \fParantheses{\frac{\alpha\beta d^{\frac 1 p}}{p}}^\frac{p}{p+1} \right ).} \\
\end{align*}

${U_{\text{lin}}(c^*)} / {U_{\text{opt}}}$ is increasing in $t$. The condition of this case 2 occurring turns out to be a condition on $t$. This condition for case 2 occurring is the following:
\begin{align*}
    \frac 1 {\beta(p+1)^{\frac{p+1}{p}}} &\leq \frac{\alpha d^{\frac 1 p}}{p}  \fParantheses{\frac{p+1}{1-\theta}}^{\frac{p+1}{p}} \\
    \Longrightarrow \frac 1 {p+1} &\leq \frac{p+1}{1-\theta} \cdot \fParantheses{\frac{\alpha\beta d^{\frac 1 p}}{p}}^\frac{p}{p+1} \\
    \Longrightarrow \frac{1}{p+1} &\leq t.
\end{align*}
Case 2 occurs when $t \geq 1/(p+1)$ and in this case ${U_{\text{lin}}(c^*)} / {U_{\text{opt}}}$ is increasing in $t$. So the smallest value of ${U_{\text{lin}}(c^*)} / {U_{\text{opt}}}$ in this case occurs when $t = 1/(p+1)$. Plugging in this value, we get that for this case, 
\[{\frac{U_{\text{lin}}(c^*)}{U_{\text{opt}}}} \geq {1 - \frac{1}{(p+1)^\frac{p+1}{p}}} \]
\end{proof}

\subsection{Proof of Theorem~\ref{thm:sampleCompNoMoralHazard}}
\begin{proof}
Recall that the first-best contract has a threshold form. The contract offers payment $t^*$ when the test error is less than or equal to $\ell^*$ and offers payment zero otherwise. Let us denote the sample complexity to get expected loss at most $\ell$ by $n(\ell)$. That is, 
\[n(\ell) = \fParantheses{\frac d {\ell - \theta}}^{1/p}.\] The optimal contract offers $t^* = \alpha n(\ell^*)$ where $\alpha$ is the cost per sample for the agent. Let us denote $n(\ell^*)$ by $n^*$. And $n^* = (pd/\alpha \beta)^{1/(p+1)}$.

We will show that the best response for the agent against this contract is never to collect samples less than $n(\ell + \epsilon)$ when the test set has size $O\left (\frac 1 {\epsilon^2} \log \frac d {\epsilon} \right)$. We show this by showing that the agent's utility in choosing $n(\ell+\Delta)$ is less than the agent's utility in collecting $n(\ell - \Delta)$ for all $\Delta > \epsilon$.

The number of samples the agent would collect to get expected error $l^* + \Delta$ is such that:
\begin{align*}
\theta + \frac d {n_1^p} &= \theta + \frac d {n^{*p}} + \Delta \\
n_1 &= \frac{n^*d^p}{(d+\Delta)^p}.
\end{align*}

Similarly, the number of samples needed to get expected error $l^* - \Delta$ is 
\[n_2 = \frac{n^*d^p}{(d-\Delta)^p}.\]

For any action of the agent, the probability that the observed loss is $\epsilon$ or more away from the expected loss is less than $2\exp(-2m \epsilon^2)$. This is by applying Hoeffding's inequality on the observed loss random variable which is bounded between 0 and 1. As a result, for $\Delta > \epsilon$, the expected payment when collecting $n_1$ and $n_2$ samples is $\leq 2t^* \exp (-2m\epsilon^2)$ and $\geq t^*(1 - 2\exp(-2m \epsilon^2))$ respectively.  The agent's utility due to $n_1$ is less than the utility due to $n_2$ when 
\begin{align*}
    \alpha n^* \fParantheses{1 - 4\exp(-2m\epsilon^2)} &\geq \alpha n^* d^p \fParantheses{\frac 1 {(d - 2\Delta n^{*1/p})^p} - \frac 1 {(d + 2\Delta n^{*1/p})^p}}.
    \intertext{Let us denote $\kappa  = d^p \fParantheses{\frac 1 {(d - 2\Delta n^{*1/p})^p} - \frac 1 {(d + 2\Delta n^{*1/p})^p}}$. So this occurs when}
    m &\geq \frac 1 {2 \epsilon^2} \log \frac 4 {1-\kappa}.
\end{align*}
Note that $\frac 1 {1-\kappa}$ is polynomial in both $d$ and $\frac 1 \epsilon$.

\end{proof}

\subsection{Proof of Proposition~\ref{prop:agentAlgorithmRepeated}}\label{proof:agentAlgorithmRepeated}
\begin{proof}
We start by restating the agent's algorithm. For any round $t$, let us denote $\sum_{t' \leq t} c_t$ by $s_t$. Since the agent incurs cost linear in the number of samples collected, the algorithm must limit the number of samples to $O \left (s_T^{2/3} \right )$. To do this, at round $t$, the agents collects $\max \left (0, \left \lfloor s_{t}^{2/3} \right \rfloor - \left \lfloor s_{t-1}^{2/3} \right \rfloor \right )$  samples. As a result, for any $t$, the number of samples collected up to round $t$ is $\lfloor s_t^{2/3} \rfloor$.To select the distribution to sample from, the agent picks one out of $\{D_1, \ldots, D_k\}$ uniformly at random. This fully describes how the agent's algorithm collects samples by describing the number of samples and distributions to sample from.

    The remaining component of the agent's algorithm is classifier selection. The classifier selection includes exploration and exploitation where the exploration is to determine which of $D_1, \ldots, D_k$ provides samples to train an accurate classifier. Since we know $D^*$ is one of these $k$ distributions, we are guaranteed that there exists a distribution providing relevant samples. 
    
    We assign rounds of sample collection to be rounds of exploration. These rounds are suitable for exploration for two reasons. The first is because the principal deploys the agent-provided classifiers in these rounds. By the algorithm's design, the agent only collects samples in rounds with $c_t > 0$, and in feedback-providing contracts, agent-provided classifiers are deployed in these rounds. Secondly, this choice also leads to the right number of exploration rounds for the optimal exploration-exploitation tradeoff. Let $\{i_1, \ldots, i_r\}$ denote the indices of rounds of exploration. This set's size is at most $\left ( \sum_{t \in [T]} c_t \right )^{2/3}$. A phased exploration is done in these rounds. 

    The phased exploration divides the indices $\{i_1, \ldots, i_r\}$ into phases where phase $j$ is of length $2^{j-1}$. So at least $2^{j-1}$ samples are collected in phase $j$. The number of phases is therefore $\log \left (\left ( \sum_{t \in [T]} c_t \right )^{2/3} \right )$. 
    
    Each phase $j$ uniformly explores classifiers learned based on samples from each distribution, collected up to phase $j-1$. Let $Q^{(j)}_i$ be the set of samples collected from $D_i$ in phases $1, \ldots, j-1$. And let $h^{(j)}_i = \mathrm{ERM}_{\mathcal{H}}\left ( Q^{(j)}_i \right)$. In each round of phase $j$, the agent picks one of $\{h^{(j)}_1, \ldots, h^{(j)}_k\}$ uniformly at random. Let $h^*_j$ be the classifier of phase $j$ with the highest sum of test accuracies. The agent can compute the test accuracy by dividing payment by the contract coefficient (which is non-zero in exploration phases).

    The agent treats all rounds other than $\{i_1, \ldots, i_r\}$ as exploitation rounds. For an exploitation round $t$, suppose the last collected sample was in phase $j$. Then in round $t$, the agent selects the best classifier in phase $j-1$: $h^*_{j-1}$. Due to the completion of the $(j-1)^{\text{th}}$ exploration phase, $h^*_{j-1}$ is guaranteed to have some optimality guarantees. 
    
\paragraph{Regret analysis.} Now we analyze the regret incurred due to this algorithm. The regret due to cost of sampling is $O(s_T^{2/3})$. We will analyze the regret due to the sub-optimality of the classifiers deployed. Let $\theta^* = \min_{h \in \mathcal{H}} L_{D^*}(h)$. Since the number of exploration rounds is $O(s_T^{2/3})$, the regret in these rounds is trivially bounded by $O(s_T^{2/3})$. We will focus on sub-optimality of classifiers deployed in the exploitation rounds. 
 
 Consider the classifier $h^*_j$ yielding the highest accuracy in phase $j$. Let $L_j(h)$ indicate the average test loss of deploying classifier $h$ in phase $j$. Each of the candidate classifiers of phase $j$: $\{h^{(j)}_1, \ldots, h^{(j)}_k\}$ is explored $2^{j-1}/k$ times since we perform uniform exploration. Therefore $\left |L_{D^*}(h^{(j)}_i) - \EE [L_j(h^{(j)}_i)] \right | \leq \sqrt{k/2^{j-1}}$ and $L_{D^*}(h_j^*) \leq \theta^*  + 2\sqrt{k/2^{j-1}}$.

 Let $t_j$ be the last non-sampling round that $h^*_{j-1}$ is selected. This means that $t_j + 1$ is a sampling round and of phase $j+1$. The number of samples drawn up to phase $j$ and hence up to round $t_j$ is $\geq 2^{j+1}$. Due to this, $t_{j}+1$ being a sampling round implies that $\sum_{i < t_i} c_i \leq 2^{j+1}$. So we can bound the regret due to deploying $h^*_{j-1}$ in non-sampling rounds to be at most 
 \begin{align*}
     \sum_{i \leq t_j} c_i (L_{D^*}(h^*_{j-1}) - \theta^*) &\leq \frac {2^{j+1} \sqrt{k}} {2^{j}} \\
     &\in O(\sqrt{2^j})
 \end{align*}

Summing over regret due to non-sampling rounds over all phases $j$ from 1 to $\log (s_T^{2/3})$, total regret in non-sampling rounds is at most: $\sum_{j=1}^{\log (s_T^{2/3})} O(\sqrt{2^j}) \in O(s_T^{2/3})$.

Therefore the total regret over sampling and non-sampling rounds is $O(s_T^{2/3})$.

\end{proof}

\subsection{Proof of Lemma~\ref{prop:queriesuseless}}\label{proof:queriesuseless}
    \begin{proof}
For each $n, q$, we will construct a class of distributions such that for any learning algorithm with access to $n$ i.i.d. samples and $q$ queries, there is a distribution $D^*$ in the class for which the learning algorithm will have excess error at least $\theta^* + \Omega(1/\sqrt{n})$ where $\theta^*$ is the optimal error achieved by the class of one-dimensional half-spaces on $D^*$. This is the class $\mathcal{H}_{1d-HS}$ which we will refer by $\mathcal{H}$
\[\mathcal{H}_{1d-HS} = \{\mathds{1}\{x \geq \theta \} : \theta \in \mathbb{R}\} \cup \{\mathds{1}\{x \leq \theta \} : \theta \in \mathbb{R}\}. \]

As a construction for the lower bound, consider the class of distributions over a domain of $M$ points, each having a uniform marginal distribution supported on $m < M$ of those points. The labelling distribution $\Pr(y=1 | x)$ is $1/2 \pm 1/2\sqrt{n}$. We later describe how to choose $m, M$ so that the lower bound holds for $n, q$.

We will show that for any set of $q$ queries, there are two distributions $D_1, D_2$ such that
\begin{enumerate}
    \item All query values are the same for $D_1, D_2$.
    \item No algorithm can distinguish between $D_1, D_2$ with probability more than $1/2$ using $n$ samples drawn.
    \item Any classifier with error $ \min_{h \in \mathcal{H}_{1d-DS}} L_{D_1}(h) + O(1/\sqrt{n})$ on $D_1$ necessarily has error\\ $\min_{h \in \mathcal{H}_{1d-DS}} L_{D_2}(h) + \Omega(1/\sqrt{n})$ on $D_2$. 
\end{enumerate}
The above properties suffice to show the required lower bound. This is because with the above properties, a learning algorithm that achieves $o(1\sqrt{n})$ expected excess error necessarily distinguishes between $D_1$ and $D_2$ with probability at least $1/2$. Distinguishing between $D_1, D_2$ should not be possible with $n$ samples and $q$ queries if the above properties hold. 

Now let us show how to choose $m, M$ and construct $D_1, D_2$ to make the above properties hold. 

Each query assigns a value of 0 or 1 to each point. So there is a sequence of length $q$ indicating the labels the queries assign for each point. We can partition the domain into points having the same sequence of query labels. By constructing $D_1$, $D_2$, so that the number of points in each partition with labelling function $1/2 + 1/2\sqrt{n}$ is the same for $D_1$ and $D_2$, we can guarantee that all query values are the same for $D_1$ and $D_2$

Let us set $M$ so that $M > 2^q m$. Since there are $2^q$ query sequences, at least one of the partition sets has size $m$. Consider the $m$ points distributions $D_1, D_2$ are supported on to be the $n$ points of samples drawn and the remaining points are points from the partition of size $m$. There are at least $m - n$ points of the support from the partition of size $m$.

Let the points in the support that are in the query partition of size $m$ be $x_1 \leq \ldots \leq x_s$ where $s \geq m - n$. $D_1$ has probability of +1 label $1/2 + 1/\sqrt{n}$ for points $x_1, \ldots, x_{s/2}$ and probability of +1 label $1/2 - 1/\sqrt{n}$ for points $x_{s/2 + 1}, \ldots, x_{m}$. $D_2$ has probability of $+1$ label $1/2 - o1/\sqrt{n}$ for points $x_1, \ldots, x_{s/2}$ and probability of +1 label $1/2 + 1/\sqrt{n}$ for points $x_{s/2 + 1}, \ldots, x_{m}$.

Outside of this partition, let the labelling distribution of any other point in the support  be the same for $D_1, D_2$. By this construction, property (1) is satisfied. 

Restricted to the query partition, the errors on distributions $D_1, D_2$ sum up to $1/2 + 1/\sqrt{n}$. By choosing $m > 2n$, the query partition makes up at least half the fraction of the support. Therefore any classifier $h$ has $L_{D_1}(h) + L_{D_2}(h) = 1/4 + 1 / 2\sqrt{n}$.  and due to this property (3) holds. 

The KL divergence between the sampling distributions from $D_1, D_2$ is at most $\sqrt{n} \frac 1 {2 \sqrt{n}}$, and therefore with probability $\geq 1 / 2$, we cannot distinguish between $D_1$ and $D_2$ using $n$ samples.  This shows property (2) holds. 
\end{proof}

%%%%%%%%%%%%%%%%%%%%%%%%%%%%%%%%%%%%%%%%%%%%%%%%%%%%%%%%%%%%%%%%%%%%%%%%%%%%%%%
%%%%%%%%%%%%%%%%%%%%%%%%%%%%%%%%%%%%%%%%%%%%%%%%%%%%%%%%%%%%%%%%%%%%%%%%%%%%%%%

\section{Miscellaneous results and discussions}
\subsection{Treating error curves as upper bounds} \label{sec:errorBounds}
For most of our results we have made use of the structured form of error curves reflecting how expected error of a learned model is assumed to vary with the number of samples used for training. This structure is inspired by statistical minimax bounds and are upper bounds rather the true error curves. We designed contracts assuming the bounds to be actual error curves. Here we discuss what we can say about these contracts without assuming the bounds to be exact error curves.

From the principal's perspective, these contracts result in  accuracy that is just as good as that of learned models. However, the principal would end up paying the agent more than it could have if the principal knew the exact error curve. We can view the shape of the true error curve as another piece of information the principal is unaware of in addition to the optimal error. This hidden information results in more information rent but does not impact the accuracy of the model obtained from delegation.

The agent's perspective of what changes is more complicated. Our contracts assumed that the agent responded assuming that the upper bound was the true error curve. It may be reasonable that before starting the delegation process, the agent believes the upper bounds to be the true curves having no other frame of reference. However after starting to collect data, it is possible that the agent will learn more about the form of the true error curve and respond differently. This is similar to how the agent can learn the optimal error while executing the contract. Analyzing how this error curve learning occurs will allow us to design truly incentive-compatible contracts. However, learning the error curve shape is learning from a much broader class and is likely to be more challenging. 

\subsection{Variable label quality model}\label{sec:dataCollectionQuality}
The setting above models the scenario where the agent does not have the option to choose the quality of the data is collects. However, the agent might be able to control the quality of the data as a function of the cost per sample. We study a model of quality of data where the quality corresponds to the quality of the labels of the data. The quality parameter $q = 1 - 2 \eta$ captures the likelihood of the labels being correct. Here, $\eta \in (0, 1/2)$ is the probability of the label being incorrect. In this setting, the expected accuracy on the principal's test set from the agent collecting $n$ samples at quality level $q$ when the optimal error is $\theta$ is $1 - \theta - \frac 1 {q n^p}$ for some $p > 0$. We assume that the cost of collecting a single sample at quality level $q$ for the agent is given by $\alpha(q)$ a function increasing in $q$ and convex. So the cost for the agent of collecting $n$ samples at quality level $q$ is $C(n,q) = \alpha(q)n$. 

In this section, we provide results for $\alpha(q) = q^b + \alpha_0$ for $b > 0$. We can think of $\alpha$ as the cost of collecting an unlabelled sample and $q^b$ as the cost of labelling an unlabelled point. The main message of this section is that even though the quality of labels is an action that the agent chooses, effectively, this choice is not information that is private from the principal. It turns out that whatever the contract is, the utility-maximizing agent executes the contract by choosing a single quality value $q^*$. The principal can also compute $q^*$ so the quality is not a reflection of information asymmetry. Therefore, this regime is essentially the same as the one studied in the previous section.

\begin{theorem}[Constant quality level]\label{thm:qualityLevel}
    For any problem parameters $d, p, \alpha_0 > 0, b > 1$, when $\alpha(q) = q^b + \alpha_0$, for any expected accuracy $a$ the agent wishes to achieve, the agent chooses a constant $q^*$ that only depends on $\alpha_0, b$ as the quality level. 
\end{theorem}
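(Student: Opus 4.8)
The plan is to show that the agent's choice of label quality is \emph{separable} from everything else: for any target accuracy $a$, minimizing the collection cost subject to reaching accuracy $a$ is equivalent to minimizing a single univariate function $\psi(q)$ that does not depend on $a$ (nor on $\theta$, nor on the payment rule). First I would fix the target accuracy $a$ and write the required excess error as $\delta = 1-\theta-a > 0$. Since $a(n,q) = 1-\theta-\frac{1}{qn^p}$, achieving expected accuracy exactly $a$ forces $q n^p = 1/\delta$, i.e.\ $n = (q\delta)^{-1/p}$; and because $a(n,q)$ and the cost $\alpha(q)n$ are both increasing in $n$ and in $q$, achieving accuracy \emph{at least} $a$ at minimum cost is the same problem. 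Substituting, the cost of reaching accuracy $a$ with quality $q$ is $\alpha(q)(q\delta)^{-1/p} = \delta^{-1/p}\,\psi(q)$, where $\psi(q) := \alpha(q)\,q^{-1/p} = q^{b-1/p} + \alpha_0\, q^{-1/p}$.

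The point is that $\psi$ contains no $\delta$ (equivalently no $a$, no $\theta$) and is independent of the payment rule. Hence the cost-minimizing quality $q^\star := \arg\min_{q \in (0,1)} \psi(q)$ is the same whichever accuracy the agent aims for, which is exactly the assertion. This also settles the more realistic picture in which the agent faces an arbitrary payment rule $P(\cdot)$: grouping the agent's joint maximization $\max_{n,q}\big(P(a(n,q)) - \alpha(q)n\big)$ by the accuracy value $a$ (on each level set $P$ is the constant $P(a)$, so no monotonicity of $P$ is needed), the agent's value is $\max_a\big(P(a) - \delta(a)^{-1/p}\psi(q^\star)\big)$, so whatever accuracy is ultimately chosen it is implemented with the universal quality $q^\star$. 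Since $q^\star$ depends only on the cost parameters, the principal can compute it too, so quality is not a channel of private information and the setting collapses to the one analyzed in the previous section.

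To identify $q^\star$ I would use the logarithmic derivative $\frac{d}{dq}\log\psi(q) = \frac{bq^{b-1}}{q^b+\alpha_0} - \frac{1}{pq}$, which has the same sign as $(pb-1)q^b - \alpha_0$. Substituting $u = q^b$, an increasing bijection of $(0,1)$, this is increasing in $u$, so $\psi$ is either strictly decreasing on $(0,1)$ (when $pb \le 1$, or when $pb>1$ but $\alpha_0 \ge pb-1$), in which case $q^\star$ is the boundary value $q = 1$ (perfect labels), or strictly decreasing then strictly increasing (when $pb>1$ and $\alpha_0 < pb-1$), in which case $q^\star = (\alpha_0/(pb-1))^{1/b}$. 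In all cases $q^\star = \min\{1,(\alpha_0/(pb-1))^{1/b}\}$ is pinned down by $\alpha_0$ and $b$ (and $p$) alone, independent of $a$ and $\theta$; the hypothesis $b>1$ enters only to make $\alpha$ convex as the model requires.

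The step that needs the most care is the decoupling in the first two paragraphs: making sure the agent's value genuinely equals $\max_a\big(P(a) - \mathrm{mincost}(a)\big)$ for an \emph{arbitrary} payment rule, which is why the argument groups the $(n,q)$-optimization by accuracy level rather than using any property of $P$; one should also note that the minimum of $\psi$ on $(0,1)$ is attained in the interior case and only approached as $q\to 1$ otherwise, so the model's range $q\in(0,1)$ is best read as admitting the limiting quality. Everything after the reduction --- the formula for $q^\star$ and the (strict) quasiconvexity of $\psi$ --- is routine one-variable calculus.
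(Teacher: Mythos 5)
Your proof is correct, and it takes a different (and arguably cleaner) route than the paper. The paper treats the agent's problem as a constrained minimization over $(n,q)$ and works through the KKT conditions of the Lagrangian, reading off $q^*$ from the stationarity equations; you instead eliminate $n$ via the accuracy constraint ($n=(q\delta)^{-1/p}$ with $\delta=1-\theta-a$) and reduce everything to minimizing the univariate function $\psi(q)=\alpha(q)q^{-1/p}$, which makes the separability of quality from the target accuracy (and from the payment rule) structurally transparent rather than something that falls out of the first-order conditions. Your version also adds two things the paper glosses over: the boundary case $q^*=1$ when the interior critical point of $\psi$ lies outside $(0,1)$, and the argument that the reduction is valid for an arbitrary payment rule by grouping the joint $(n,q)$ maximization by accuracy level. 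One substantive discrepancy worth flagging: your stationary point is $q^*=(\alpha_0/(pb-1))^{1/b}$, whereas the paper states $q^*=(\alpha_0/(b-1))^{1/b}$. Redoing the paper's own Lagrangian computation carefully (equating $\lambda_1=(q^b+\alpha_0)\,q\,n^{p+1}/p$ from $\nabla_n\mathcal{L}=0$ with $\lambda_1=b\,q^{b+1}n^{p+1}$ from $\nabla_q\mathcal{L}=0$) yields $q^b(pb-1)=\alpha_0$, i.e.\ your formula; the paper appears to have dropped the factor $p$. Consequently $q^*$ does depend on $p$ in addition to $\alpha_0$ and $b$, which slightly weakens the literal wording of the theorem but not its essential content, namely that $q^*$ is independent of the target accuracy $a$ and of $\theta$, so quality is not a channel of information asymmetry.
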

\begin{proof}
If the agent aims to achieve an expected accuracy of at least $s$, then the agent chooses the number of samples and quality level by solving the following optimization problem:
\begin{equation*}
\begin{aligned}
\min_{q,n} \quad & \fParantheses{q^b + \alpha_0}  n\\
\textrm{s.t.} \quad & \theta + \frac {1} {qn^p} \leq 1 - a \\
  &0 \leq q \leq 1.    \\
\end{aligned}
\end{equation*}
The solution of this optimization problem can be calculated as follows:
\begin{align*}
    \mathcal{L} &= \fParantheses{q^b + \alpha}n + \lambda_1 \fParantheses{\theta - l + \frac 1 {qn^p}} + (\lambda_2 - \lambda_3)q  \\
    \nabla_n \mathcal{L} &= q^b + \alpha - \frac{p\lambda_1}{b n^{p+1}} \\
    \nabla_q \mathcal{L} &= b q^{b-1}n - \frac{\lambda_1}{q^2n^p} + \lambda_2 - \lambda_3.
    \intertext{If $\lambda_2^* = \lambda_3^* = 0$, we obtain}
    \Longrightarrow \lambda_1^* &= b {q^*}^{b + 1} n^{p+1} \\
    \Longrightarrow {q^*} &= \fParantheses{\frac \alpha {b-1}}^{\frac 1 b},
    \intertext{and $n^*$ is obtained by solving}
    \theta + \frac 1 {q^* {n^*}^{p}} &= l.
\end{align*}
If $(\alpha / (b-1))^{1/b}$ is not in $[0,1]$, then $\lambda_2^*$ or $\lambda_3^*$ is non-zero and $q^*$ is either 0 or 1.  
\end{proof}

\subsection{Closed-form solution for the two optimal error, hidden state problem}\label{app:closedFormOpAdvSel}
\newcommand{\thetalow}{\underline{\theta}}
\newcommand{\nhigh}{\overline{n}}
\newcommand{\nlow}{\underline{n}}
\newcommand{\thigh}{\overline{t}}
\newcommand{\tlow}{\underline{t}}
Here we solve the state-aware optimization problem~\ref{opt:advSelection} when there are two optimal errors, $\thetalow \leq \thetahigh$, with  a prior probability of $\nu$ for $\thetalow$. Let $L(n) = d/n^p$. We solve the following optimization problem and show that the solution is the optimal solution we are looking for. Note that this problem omits the incentive-compatibility constraint for the problem $\thetahigh$ and the participation constraint for the easy problem. 
\begin{equation*}
\begin{aligned}
\min_{\nlow,\nhigh,\tlow, \thigh} \quad & \nu(L(\nlow) + \beta \tlow) + (1-\nu)(L(\nhigh) + \beta \thigh)\\
\textrm{s.t.} \quad & \thigh - \frac {\alpha \nhigh} {(1 + \Delta {\nhigh^p})^{1/p}} - \tlow + \alpha \nlow \leq 0 \\
  &\alpha \nhigh - \thigh \leq 0.    \\
\end{aligned}
\end{equation*}

First note that this is a convex optimization problem, where the objective is convex by the convexity of the loss and the PC constraint is a linear constraint. All that is left is to check that the IC constraint is convex. This is the sum of linear terms and the term $\frac {- \alpha d^{1/p} \nhigh} {(d + \Delta {\nhigh^p})^2}$. The second derivative of this term is $\frac{3\alpha d^{1/p} \Delta }{2{\nhigh^p}(d + \Delta {\nhigh^p})^4}$. Since the second derivative is positive, the IC constraint is convex. 

Consider the Lagrangian
\[L(\nlow,\nhigh,\tlow, \thigh; \lambda_1, \lambda_2) = \nu(L(\nlow) + \beta \tlow) + (1-\nu)(L(\nhigh) + \beta \thigh) + \lambda_1 \fParantheses{\thigh - \frac {\alpha d^{1/p} \nhigh} {(d + \Delta {\nhigh^p})^{1/p}} - \tlow + \alpha \nlow} + \lambda_2 \fParantheses{\alpha \nhigh - \thigh}, \]
which has the following gradients:
\begin{align*}
    \nabla_{\nlow} L &= \nu L'(\nlow) + \alpha \lambda_1 &\text{(G1)} \\
    \nabla_{\tlow} L &= \nu \beta - \lambda_1 &\text{(G2)} \\
    \nabla_{\thigh} L &= (1 - \nu) \beta - \lambda_2 + \lambda_1 &\text{(G3)} \\
    \nabla_{\nhigh} L &= (1-\nu)L'(\nhigh) + \alpha \lambda_2 - \lambda_1 \fParantheses{\frac{\alpha d^{1/p}}{(d+\Delta {\nhigh^p})^{(p+1)/p}}} &\text{(G4)}
\end{align*}
To choose values $\nlow^*,\nhigh^*,\tlow^*, \thigh^*, \lambda_1^*, \lambda_2^*$ that satisfy the KKT conditions, first we set the gradients to zero: 
\begin{align*}
    L'(\nlow^*) &= -\alpha \beta &\text{(From (G1), (G2))} \\
    \lambda_1^* &= \nu \beta &\text{(From (G2)} \\
    \lambda_2^* &= \beta &\text{(From (G3) and value of $\lambda_1^*$)} \\
    (1-\nu) L'(\nhigh^*) + &\alpha\beta -\frac {\nu \alpha \beta}{(1+\Delta {\nhigh^p})^{(p+1)/p}} = 0  &\text{(From (G4) and values of $\lambda_1^*, \lambda_2^*$)} \\
    \Longrightarrow L'(\nhigh^*) &= - \frac {\alpha \beta} {1-\nu} \fParantheses{1 - \frac {\nu d^{1/p}}{(d+\Delta {\nhigh^{*p}})^{(p+1)/p}}}.
    \intertext{By complementary slackness,}
    \alpha \nhigh^* &= \thigh^* \\
    \tlow^* &= \alpha \fParantheses{\nlow^* - \frac {d^{1/p}\nhigh^*} {(d + {\nhigh^{*p}})^{1/p}} + \nhigh^*}.
\end{align*}

The contract described by $(\nlow^*,\nhigh^*,\tlow^*, \thigh^*)$ satisfies the properties of the second-best contract in the classical contract theory setting. We list these properties here:
\begin{enumerate}
    \item [P1] No output distortion for the easy problem: $\nlow^*$ is the solution of $L'(\nlow^*) = -\alpha \beta$ which is also the value of $n^{fb}$. So for the easy problem, the agent gathers the same number of samples as in the full information case.
    
    \item [P2] Downward distortion for the hard problem: 
    \begin{align*}
        L'(\nhigh^*) &= - \frac {\alpha \beta} {1-\nu} \fParantheses{1 - \frac {\nu d^{1/p}}{(d+\Delta {\nhigh^{*p}})^{(p+1)/p}}} \\
        &< -\alpha \beta \\
        &= L'(n^{fb}).
    \end{align*}
    So $\nhigh^* < n^{fb}$. For the hard problem, the agent gathers fewer samples than in the full information case.
    
    \item [P3] When the problem is easy, the agent gets positive information rent:
    \begin{align*}
        \tlow^* - \alpha \nlow^* &= \nhigh^* - \frac {d^{1/p}\nhigh^*} {(d + \Delta {\nhigh^{*p}})^{(p+1)/p}} \\
        &> 0.
    \end{align*}
\end{enumerate}

We now check that the contract that is the solution to the above optimization problem also satisfies the omitted constraints. First we start with the participation constraint for the easy problem. By the positive information rent property (P3) we know that $\alpha \nlow^* < \tlow^* $. Next consider the incentive-compatibility constraint for the hard problem. We only need to check when $\Delta {\nlow^{*p}} < d$. Otherwise, the IC constraint automatically holds. The difference in agent's utility between choosing the $\overline{\ell}^* = \thetahigh + d/{\nhigh^*}^p$ option and the $\underline{\ell}^* = \thetalow + d/{\nlow^*}^p$ option is:
\begin{align*}
    &-\alpha \nlow^* + \thigh + \frac{\alpha d^{1/p}\nlow^*}{\fParantheses{d - \Delta {\nlow^{*p}}}^p} - \tlow^* \\
    &=  -\frac{\alpha d^{1/p} \nhigh^*}{(d + \Delta {\nhigh^{*p}})^{1/p}} + \alpha \nhigh^* + \frac{\alpha d^{1/p} \nlow^*}{\fParantheses{d - \Delta {\nlow^{*p}}}^{1/p}} - \alpha \nlow^* &\text{(Using values of $\tlow^*, \thigh^*$)} \\
    &= \alpha \fParantheses{\frac {d^{1/p} \nlow^*} {(d - \Delta {\nlow^{*p}})^{1/p}} - \nlow^* + \frac {d^{1/p} \nhigh^*} {(d + \Delta {\nhigh^{*p}})^{1/p}} - \nhigh^*} \\
    &\geq \alpha \nhigh^* \fParantheses{\frac {1} {(1 - \Delta {\nlow^{*p}})^{1/p}}  + \frac {1} {(1 + \Delta {\nlow^{*p}})^{1/p}} - 2} &\text{(Since $\nlow^* < \nhigh^*$)}
    \intertext{Note that the function $\frac {d^{1/p}} {(d-x)^q} + \frac {d^{1/p}} {(d+x)^q}$ is increasing in the interval $[0,1)$ for every $q$. The derivative of that function is $q \fParantheses{\frac {d^{1/p}} {(d-x)^{q+1}} - \frac {d^{1/p}} {(d+x)^{q+1}}}$. This is nonnegative and lies in $[0,1)$.}
    &\geq 0 &\text{(Since we assume $0 < \Delta {\nlow^{p*}} < 1$)}.
\end{align*}
This solution finds the optimal contract under hidden state. 

\subsubsection{Separating contracts}\label{app:sep}
To be able to compute any separating contract, it suffices to solve the above optimization problem with the additional constraint $\nlow, \nhigh \geq n_0$ for some $n_0 \geq 0$. The new optimizers $\nlow(n_0),\nhigh(n_0),\tlow(n_0).\thigh(n_0)$ are as follows:
\begin{align*}
    \nlow(n_0) &= \max(\nlow^*, n_0) \\
    \nhigh(n_0) &= \max(\nhigh^*, n_0) \\
    \tlow(n_0) &= \alpha \nlow(n_0) \\
    \thigh(n_0) &= \alpha \fParantheses{\nlow(n_0) - \frac {d^{1/p}\nhigh(n_0)} {(d + {\nhigh(n_0)})^{1/p}} + \nhigh(n_0)}.
\end{align*}

\subsubsection{Optimal pooling contract}\label{app:pool}
The optimal pooling contract optimizes over $\nhigh$. $t = \alpha \nhigh$. $\nlow$ is chosen such that
\begin{align*}
    \thetalow + \frac d {\nlow^p} &= \thetahigh + \frac{d}{\nhigh^p} \\
    \Longrightarrow \nlow = \frac{d^{1/p} \nhigh}{(d + \Delta \nhigh^p)^{1/p}}.
\end{align*}
Thus the optimization problem is choosing $\nhigh$ to be the minima of 
\[\nu \frac{d}{\fParantheses{\frac{d^{1/p} n}{(d + \Delta n^p)^{1/p}}}^p} + (1-\nu)\frac{d}{n^p} + \alpha \beta n.\]

\subsection{Tightness of linear contracts approximation}
Our main result (Theorem~\ref{cor:advSelLinear}) gave a linear contract that provably approximates the optimal contract up to a constant factor. This approximation factor stated in Proposition~\ref{thm:LinContractApprox} is also tight as stated in the following theorem, which shows that there is a problem instance for which no linear contract can do better than a given approximation factor. The problem instance for which the approximation ratio is tight is one that has deterministic test error distribution, which arises when the size of the test set tends to infinity.

\begin{theorem}[Tightness of approximation bound] \label{thm:tightApprx}
For every $\theta \in [0,1), p, d > 0$, there are problem parameters $\alpha, \beta > 0$ such that for the problem instance with these parameters, all linear contracts have at most  $1 - \frac{1}{(p+1)^\frac{p+1}{p}}$ times the optimal utility.
\end{theorem}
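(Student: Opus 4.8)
The tight instance is the one in which the test accuracy is deterministic in the agent's action (the large-test-set limit) and $\theta$ is known, so that there is neither hidden action nor hidden state. In this instance the first-best contract of Proposition~\ref{thm:firstBest} is implementable, and since the agent can always refrain from collecting samples, inducing him to collect $n$ samples requires paying at least $\alpha n$; hence no contract can beat $\max_n(a(n)-\beta\alpha n)$, i.e.\ the optimal utility equals the first-best utility $U_{\mathrm{opt}}=1-\theta-(p+1)\fParantheses{\alpha\beta d^{1/p}/p}^{\frac p{p+1}}$. It therefore suffices to choose $\alpha,\beta>0$ so that no linear contract attains more than $\fParantheses{1-(p+1)^{-\frac{p+1}p}}U_{\mathrm{opt}}$. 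The first reduction is one-dimensional: under a linear contract $c$ the agent's best response is $n(c)=(cdp/\alpha)^{1/(p+1)}$, so every induced action $n$ comes from $c=\alpha n^{p+1}/(dp)$, and the principal's utility is $(1-\beta c)a(n)=f(n):=\fParantheses{1-\tfrac{\alpha\beta n^{p+1}}{dp}}\fParantheses{1-\theta-\tfrac d{n^p}}$. With $x=n/n^*$, $n^*=(dp/\alpha\beta)^{1/(p+1)}$ and $B:=\fParantheses{\alpha\beta d^{1/p}/p}^{\frac p{p+1}}=d/{n^*}^p$, this is $f=(1-\theta)(1-x^{p+1})-B(x^{-p}-x)$ on $x\in(0,1)$; the participation constraint ($c\ge c_2$, the threshold from the proof of Proposition~\ref{thm:LinContractApprox}) restricts this to $x\ge x_{\min}$ with $x_{\min}^p=(p+1)B/(1-\theta)$, and contracts with $c\notin[c_2,1/\beta)$ are easily seen to be dominated by the contract $c_1$ below.

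\textbf{Parameter choice.} Fix $\beta>0$ arbitrary and choose $\alpha$ so that $B=\frac{1-\theta}{1+p(p+1)^{\frac{p+1}p}}$ (equivalently $\alpha\beta=\tfrac p{d^{1/p}}\fParantheses{\tfrac{1-\theta}{1+p(p+1)^{(p+1)/p}}}^{\frac{p+1}p}$). A direct computation shows this makes $x_1:=(p+1)^{-1/p}$ a stationary point of $f$, and $x_1$ is exactly $n(c_1)/n^*$ for the contract $c_1=\frac1{\beta(p+1)^{\frac{p+1}p}}$ of Proposition~\ref{thm:LinContractApprox}. Using the elementary inequality $(p+1)^{1/p}\ge 1+\tfrac1{p+1}$ one gets $1+p(p+1)^{\frac{p+1}p}>(p+1)^2$, hence $B<\frac{1-\theta}{(p+1)^2}$, so $x_{\min}<x_1<1$ (equivalently $c_2<c_1<1/\beta$): the contract $c_1$ sits strictly inside the feasible range and yields the principal positive utility.

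\textbf{Unimodality and conclusion.} To upgrade ``$x_1$ is a stationary point'' to ``$x_1$ maximizes $f$ over the whole feasible range of linear contracts'', show $f$ is unimodal on $(0,1)$. Multiplying $f'(x)=-(p+1)(1-\theta)x^p+Bpx^{-p-1}+B$ by $x^{p+1}>0$, the equation $f'(x)=0$ becomes $\phi(x)=Bp$ with $\phi(x)=(p+1)(1-\theta)x^{2p+1}-Bx^{p+1}$; since $\phi'(x)=(p+1)x^p\fBrackets{(2p+1)(1-\theta)x^p-B}$, the function $\phi$ is strictly decreasing then strictly increasing on $(0,\infty)$, with $\phi(0)=0$, a negative minimum, and $\phi\to\infty$, so $\phi(x)=Bp>0$ has a unique root. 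Combined with $f'(x)\to+\infty$ as $x\to0^+$ and $f'(1)=(p+1)(B-(1-\theta))<0$, this root lies in $(0,1)$ and is the unique maximizer of $f$. Since we engineered it to be $x_1$, and $d/n(c_1)^p=(p+1)B$, we get $a(n(c_1))=1-\theta-(p+1)B=U_{\mathrm{opt}}$ and $f(n(c_1))=(1-\beta c_1)U_{\mathrm{opt}}=\fParantheses{1-(p+1)^{-\frac{p+1}p}}U_{\mathrm{opt}}$. Hence the best linear contract attains exactly the fraction $1-(p+1)^{-\frac{p+1}p}$ of the optimal utility, which is the claimed tightness.

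I expect the unimodality step to be the only nonroutine part: verifying that $n(c_1)$ is a stationary point of the principal's linear-contract utility is mechanical, but concluding that it is therefore the \emph{global} maximum over every feasible linear contract needs the sign analysis of $f'$ via the auxiliary polynomial $\phi$. Everything else is algebraic bookkeeping built on Propositions~\ref{thm:firstBest} and~\ref{thm:LinContractApprox}, plus the observation that in the deterministic instance the optimal contract is the first-best contract.
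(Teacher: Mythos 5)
Your proof is correct and follows essentially the same route as the paper's: the tight instance is the deterministic (infinite-test-set) one, and your parameter choice $B=\frac{1-\theta}{1+p(p+1)^{(p+1)/p}}$ is algebraically identical to the paper's condition making $c_1=\frac{1}{\beta(p+1)^{(p+1)/p}}$ a stationary point of the principal's linear-contract utility while still satisfying the participation constraint. Your unimodality argument via the auxiliary polynomial $\phi$ additionally justifies that this stationary point is the \emph{global} maximizer over all feasible linear contracts, and your explicit check that the deterministic optimal utility equals the first-best utility makes exact the ratio the paper only asserts — so your write-up is, if anything, more complete than the paper's.
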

\begin{proof}

We will show that there exist $\alpha, \beta$ such that the contract $\frac 1 {\beta(p+1)^{\frac{p+1}{p}}}$ is the optimal contract chosen by the principal. This contract will also satisfy the participation constraint for our chosen values of $\alpha, \beta$. Recall that the participation constraint is 
\begin{align*}
    \frac 1 {\beta(p+1)^{\frac{p+1}{p}}} &\geq \frac{\alpha d^{\frac 1 p}}{p} \cdot \fParantheses{\frac{p+1}{1-\theta}}^{\frac{p+1}{p}} \\
    \equiv 1-\theta &\geq \fParantheses{\frac{\alpha \beta d^{\frac 1 p}}{p}}^{\frac{p}{p+1}}(p+1)^2.
\end{align*}

The principal chooses the contract that sets the derivative of the above quantity to zero as long as that contract satisfies the participation constraint. If setting $\frac 1 {\beta(p+1)^{\frac{p+1}{p}}}$ yields a zero derivative and it satisfies the participation constraint, then it is the optimal linear contract. The derivative relative to $c$ is
\[(1-\beta c) \fParantheses{\frac{\alpha d^{\frac 1 p}}{p}}^\frac{p}{p+1} \cdot \frac{p}{p+1} \cdot \frac{1}{c^{\frac{2p+1}{p+1}}} - \beta \fParantheses{1 - \theta - \fParantheses{\frac{\alpha d^{\frac 1 p}}{p}}^\frac{p}{p+1}}.\]
Setting $c = \frac 1 {\beta(p+1)^{\frac{p+1}{p}}}$, the derivative is
\begin{align*}
    &\fParantheses{1 - \frac{1}{(p+1)^\frac{p+1}{p}}} \frac{p}{p+1}\fParantheses{\frac{\alpha d^{\frac 1 p}}{p}}^\frac{p}{p+1} \beta^\frac{2p+1}{p+1}(p+1)^\frac{2p+1}{p} \\
    &- \beta \fParantheses{1 - \theta - (p+1)\fParantheses{\frac{\alpha \beta d^{\frac 1 p}}{p}}^\frac{p}{p+1}}.
\end{align*}
We can choose $\alpha, \beta$ to set this derivative to zero by choosing $\alpha, \beta$ satisfying:
\begin{align*}
    1-\theta &= \fParantheses{\frac{\alpha \beta d^{\frac 1 p}}{p}}^\frac{p}{p+1} (p+1) \fParantheses{1 + \fParantheses{1 - \frac{1}{(p+1)^\frac{p+1}{p}}}p(p+1)^\frac{1}{p}}.
    \intertext{Note that for every $p > 0$, $\fParantheses{1 + \fParantheses{1 - \frac{1}{(p+1)^\frac{p+1}{p}}}p(p+1)^\frac{1}{p}} > p+1$. So,}
    1-\theta &\geq \fParantheses{\frac{\alpha \beta d^{\frac 1 p}}{p}}^\frac{p}{p+1} (p+1)^2.
\end{align*}
This shows that there are problem parameters that make $c^* = \frac 1 {\beta(p+1)^{\frac{p+1}{p}}}$ the optimal linear contract. In the proof of Proposition~\ref{thm:LinContractApprox}, we showed that this linear contract achieves at least $1- \frac{1}{(p+1)^{\frac{p+1}{p}}}$ times the optimum utility. When the problem involves a deterministic mapping between the number of samples and the observed accuracy, this ratio is exact. 
\end{proof}

\end{document}